\documentclass{article} 
\usepackage{iclr2027_conference,times}

\usepackage{amsmath,amsfonts,bm}

\def\eqref#1{equation~\ref{#1}}

\def\1{\bm{1}}

\DeclareMathAlphabet{\mathsfit}{\encodingdefault}{\sfdefault}{m}{sl}
\SetMathAlphabet{\mathsfit}{bold}{\encodingdefault}{\sfdefault}{bx}{n}

\usepackage[pagebackref]{hyperref}
\usepackage{url}

\usepackage{booktabs}
\usepackage{graphicx}
\usepackage{longtable}
\usepackage{array}
\usepackage{amsmath,amssymb,amsthm}
\usepackage{wrapfig}
\usepackage{placeins}

\theoremstyle{plain}
\newtheorem{theorem}{Theorem}[section]
\newtheorem{proposition}[theorem]{Proposition}
\newtheorem{lemma}[theorem]{Lemma}

\theoremstyle{definition}

\theoremstyle{remark}

\usepackage{enumitem}

\definecolor{iclrblue}{rgb}{0.21,0.49,0.74}
\hypersetup{breaklinks=true,colorlinks=true,citecolor=iclrblue}

\definecolor{electricindigo}{rgb}{0.44, 0.0, 1.0}
\usepackage{multirow}

\definecolor{deblue}{RGB}{11,132,147}
\definecolor{ocra}{RGB}{204, 119, 34}
\usepackage{tikz}

\usepackage[capitalize]{cleveref}
\crefname{section}{Sec.}{Secs.}
\Crefname{section}{Section}{Sections}
\Crefname{table}{Table}{Tables}
\crefname{table}{Tab.}{Tabs.}
\definecolor{indigo(web)}{rgb}{0.29, 0.0, 0.51}

\usepackage{xcolor}

\definecolor{darkblue}{RGB}{40,40,85}
\definecolor{babyblue}{rgb}{0.54, 0.81, 0.94}
\definecolor{pearDark}{HTML}{2980B9}
\definecolor{pearDarker}{HTML}{1D2DEC}

\usepackage{colortbl}
\usepackage{tabulary}
\usepackage{etoolbox}

\usepackage{pifont}

\theoremstyle{plain}

\title{Physics-Informed Neural Plasticity: PDE Solvers That Reshape Themselves}

\author{Chun-Wun Cheng\thanks{These authors contributed equally and work done during a visit in Yau Mathematical Sciences Center} \\
University of Cambridge\\
\texttt{cwc56@cam.ac.uk}
\And
Bingcheng Hu\footnotemark[1] \\
Shanghai Normal University \\
\texttt{1000534759@smail.shnu.edu.cn}
\AND
Angelica I. Aviles-Rivero\thanks{Corresponding author.} \\
YMSC, Tsinghua University\\
\texttt{aviles-rivero@tsinghua.edu.cn}
}

\iclrfinalcopy 
\begin{document}
\addtocontents{toc}{\protect\setcounter{tocdepth}{-1}}

\maketitle
 
\begin{abstract}
Physics-informed neural PDE solvers adapt their parameters to satisfy governing equations, yet their \emph{representational structure typically remains fixed throughout training}. This rigidity is poorly matched to PDE solutions with strongly heterogeneous complexity across space and space--time, leaving capacity insufficient where the physics is difficult and redundant where it is simple. We introduce \emph{physics-informed neural plasticity}, a paradigm in which the representation itself reshapes during optimization in response to unresolved physics. We instantiate this principle with \emph{Representation Capacity Adaptation for PDEs} (ReCAP), a Gaussian-localized solver that dynamically redistributes capacity through local enrichment, residual-directed splitting, gate-based pruning, and function-aware merging. ReCAP uses responsibility-weighted error indicators and the geometry of residual energy to determine where and how to refine. To limit the disturbance introduced by splitting, we introduce \emph{quiet-child refinement}, which initializes new components by transporting the parent representation while controlling instantaneous functional perturbation. We further establish conditional a posteriori reliability and structural-stability guarantees linking localized physics residuals to solution error and stable refinement. Across five challenging 3D and 4D PDE benchmarks against 11 physics-informed solvers, ReCAP achieves the lowest relative $L^2$ error on every problem, reducing error by $10.7\%$--$27.5\%$ relative to the strongest competing result. These results suggest that physics-informed solvers need not merely learn their parameters---they can learn how their representational capacity should be organized.
\end{abstract}

\section{Introduction}
\label{sec:introduction}

Physics-informed neural networks (PINNs) approximate solutions to partial differential equations (PDEs) by optimizing differentiable function representations against governing equations and available initial or boundary data. Their mesh-free formulation and unified treatment of forward and inverse problems make them attractive when conventional discretization is difficult or solution data are scarce \citep{raissi2019physics,karniadakis2021physics}. Yet accurate training remains challenging for high-frequency, multiscale, and strongly nonlinear solutions, where spectral bias and optimization difficulties can leave localized structures 
unresolved even when the overall loss appears small \citep{rahaman2019spectral,wang2021understanding,krishnapriyan2021characterizing}. This challenge is especially pronounced when solution complexity varies across space or space--time: smooth regions may coexist with oscillatory waves, localized peaks, or rapidly evolving fronts that demand markedly different approximation resources.

Yet most existing approaches share a key limitation: \emph{the parameters learn, but the representation itself largely stays fixed}. Optimization strategies, loss balancing, causal training, and residual-adaptive sampling can substantially improve PINN training \citep{wu2023comprehensive,daw2022mitigating,wang2024piratenets}, but they generally operate within a representational budget chosen before training. Adaptive sampling changes \emph{where} the PDE is enforced, but not \emph{what the model can represent locally}. As a result, capacity may remain redundant in already-resolved regions while being insufficient where the physics is hardest to resolve.

We argue that physics-informed PDE solvers should instead exhibit \emph{neural plasticity}: the ability to reshape their representational capacity during training in response to unresolved physics. Rather than optimizing a fixed representation, a plastic solver jointly learns the solution and reorganizes the capacity used to express it. 
 
Localized neural representations provide a natural basis for such plasticity. FBPINNs decompose the domain among overlapping subnetworks \citep{moseley2023finite}; PIG learns the locations and shapes of Gaussian features while fixing their number at initialization \citep{kang2025pig}; and AB-PINNs introduce learnable RBF-defined subdomains with residual-driven addition of local subnetworks \citep{botvinick2026ab}. 

These developments motivate a broader research question: how can physics-informed solvers coordinate \emph{where capacity is needed, what form it should take, and when it has become redundant}, while limiting disruption to the learned solution?

To realize this principle, we introduce \textbf{Representation Capacity Adaptation for PDEs (ReCAP)}, a Gaussian-localized physics-informed solver whose representation can enrich, split, merge, and prune itself during training. Each component carries a learned center and scale, a contribution gate, and a local polynomial approximation. Component responsibilities turn global physics errors into local adaptation signals, allowing ReCAP to \emph{redistribute} capacity rather than simply increase it. Difficult regions receive higher local polynomial order or additional localized components, while weak or functionally redundant components are removed or consolidated. 

These operations implement a common principle: adapt the local approximation space to unresolved physics through \(p\)-like enrichment, \(h\)-like refinement, and coarsening.

For such plasticity to be useful, structural changes must be both \emph{targeted and stable}. ReCAP uses responsibility-weighted physics errors to identify unresolved components and a residual-weighted covariance to orient splitting along the dominant spread of unresolved residual energy. Merging is function-aware, requiring both effective overlap and agreement of local predictions rather than geometric proximity alone. Structural refinement can nevertheless perturb an already learned solution. We therefore introduce a \emph{quiet-child} update: the trained parent is retained, its polynomial is translated into the child's shifted and rescaled coordinates, and the child is introduced through a small gate. This creates new representational capacity while controlling its immediate effect on the learned function.

We complement ReCAP with theoretical analysis connecting local adaptation signals to global solution accuracy and characterizing the stability of structural refinement. Across challenging three- and four-dimensional PDE benchmarks, our experiments further show that dynamically redistributing representational capacity can outperform both fixed-capacity and competing physics-informed solvers. Together, these results support neural plasticity as a principled approach to adapting \emph{where and how} capacity is deployed during physics-informed learning. Our contributions are:

\begin{itemize}[leftmargin=*, nosep]

    \item
    We formulate \emph{physics-informed neural plasticity}, where representational
    capacity becomes an adaptive variable of PDE optimization rather than a fixed
    design choice, and introduce ReCAP to jointly adapt local approximation order,
    component geometry, and component count.

    \item
    We develop a responsibility-driven adaptation mechanism that couples local
    \(p\)-enrichment with residual-directed \(h\)-like splitting. Splitting is
    oriented by the leading eigenvector of a responsibility- and residual-weighted
    covariance, directing new capacity along the dominant spread of unresolved
    physics.

    \item
    We introduce function-aware coarsening and \emph{quiet-child} refinement:
    merging requires both effective overlap and agreement of local predictions,
    while polynomial coordinate translation and gated child insertion create new
    capacity with controlled instantaneous perturbation of the learned solution.

    \item
    We establish conditional a posteriori reliability connecting mass-weighted
    local physics indicators to solution error under PDE stability and
    partition-of-unity responsibilities. For normalized mixtures with a linear
    transport operator, we further derive a gate-dependent bound on the
    instantaneous loss increase caused by additive quiet-child refinement,
    clarifying how capacity can grow with controlled disruption.

    \item
    Across five 3D/4D PDE benchmarks and 11 competing physics-informed solvers,
    ReCAP achieves the lowest relative \(L^2\) error on every benchmark.

\end{itemize}

\section{Related Work}
\label{sec:related_work}

\paragraph{Physics-informed optimization and sampling.}
PINNs enforce differential equations through a training objective \citep{raissi2019physics,karniadakis2021physics}. Gradient imbalance, spectral bias, and difficult optimization landscapes motivate improved training schemes and architectures \citep{wang2021understanding,rahaman2019spectral,krishnapriyan2021characterizing,cho2023separable,wang2023expert,wang2024piratenets}. Residual-adaptive sampling and retain-resample-release sampling direct equation evaluations toward poorly resolved regions \citep{wu2023comprehensive,daw2022mitigating}. These methods alter training signals or their optimization; ReCAP additionally changes the available local basis. Sampling and representation adaptation are complementary, and the experiments examine their cumulative effect.

\paragraph{Local representations and adaptive decomposition.}
Radial basis functions provide trainable localized approximations \citep{broomhead1988multivariable}. Domain-decomposed PINNs, including XPINNs and FBPINNs, organize approximation over local regions \citep{jagtap2022extended,moseley2023finite}. PIG combines trainable Gaussian features with a lightweight neural network \citep{kang2025pig}, while AB-PINNs learn local subdomains and support residual-driven addition of new ones \citep{botvinick2026ab}. ReCAP builds on this local-representation perspective by coordinating component insertion and removal with local polynomial-order changes. Its splitting rule uses residual geometry, its merge criterion checks local functional agreement, and its child initialization accounts explicitly for the coordinate dependence of polynomial coefficients.

\paragraph{Adaptive approximation spaces.}
Classical $hp$ methods allocate degrees of freedom by changing element size and approximation order \citep{babuvska1994p}; hp-VPINNs bring domain decomposition and $hp$ refinement into variational physics-informed learning \citep{kharazmi2021hp}. ReCAP uses analogous operations on learned Gaussian components rather than a fixed mesh. This analogy motivates the algorithm but does not transfer finite-element convergence guarantees to it. 

In particular, the residual scores used for adaptation are not local solution-error bounds: their reliability requires a stability estimate for the PDE and appropriate mass weighting, as made explicit in Section~\ref{sec:recap-theory}.

\section{Methodology}
\label{sec:method}

ReCAP realizes \emph{physics-informed neural plasticity} by adapting the approximation space together with its trainable parameters. At each stage, the structural state $\mathcal S=(M,p_1,\ldots,p_M)$ specifies the number of Gaussian components and their active polynomial orders; the continuous parameters $\Theta$ specify their locations, scales, gates, and local approximations. Training alternates between optimizing $\Theta$ at fixed $\mathcal S$ and updating the representation using localized physics errors. This separates two complementary tasks: fitting the current approximation space and changing where that space has capacity.

The adaptation rules act at different levels. Polynomial enrichment increases local expressivity without adding components; splitting introduces a more localized component; pruning and merging reduce weakly active or redundant components. The base objective $\mathcal L_{\mathrm{base}}$ determines how the PDE and its conditions are enforced, whereas the adaptation scores determine which components are modified. These scores need not be additional loss terms. Figure~\ref{fig:recap-overview} summarizes the loop. Appendix~\ref{app:construction} details the component updates, and Appendix~\ref{sec:further_theory} contains the theoretical statements and proofs.

\subsection{Representation and local error indicators}
\label{sec:representation}

On a spatial or spatiotemporal domain $\Omega\subset\mathbb R^d$, consider $\mathcal N[u](z)=0$ with initial or boundary conditions $\mathcal B[u]=g$. Here $d$ counts all input coordinates, including time when present, and $r_\Theta=\mathcal N[u_\Theta]$ is the physics residual. For FM3D, $z=(t,x,y)$ and $r_\Theta=\partial_tu_\Theta+a\partial_xu_\Theta+b\partial_yu_\Theta$. The other benchmarks use their respective differential operators, given in Appendix~\ref{sec:dataset}.

\begin{figure}[t!]
  \centering
  \includegraphics[width=\linewidth]{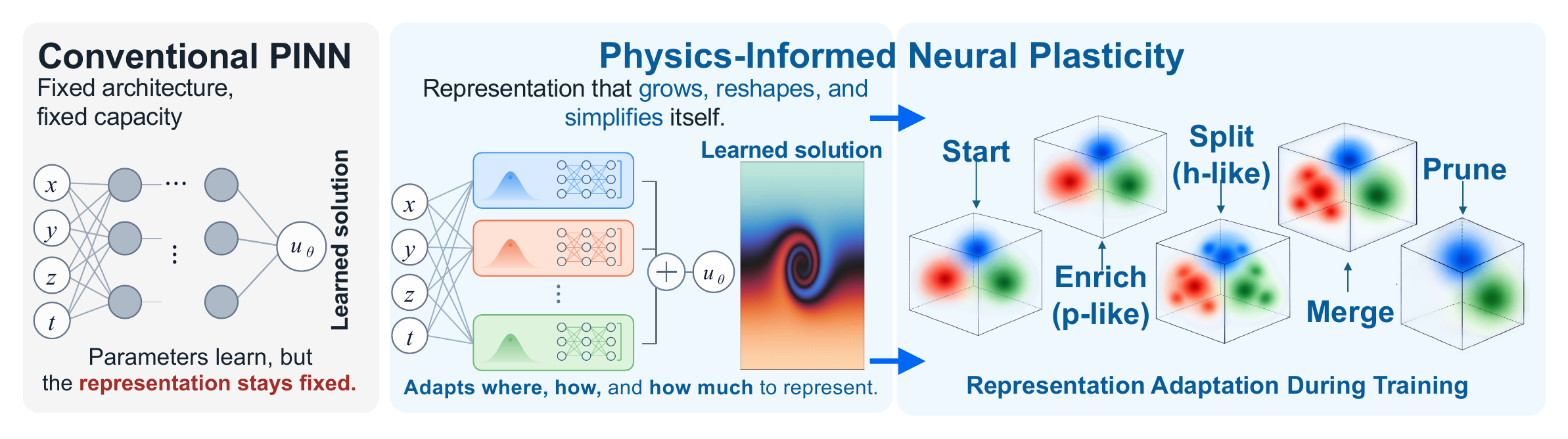}
  \caption{Unlike conventional PINNs with fixed representational capacity, ReCAP reshapes its representation during training through local enrichment, splitting, merging, and pruning, allocating capacity where the PDE requires it.}
  \label{fig:recap-overview}
\end{figure}

Each component $G_m=(\mu_m,\sigma_m,g_m,c_m,p_m)$ has a center, positive diagonal scale, gate logit, polynomial coefficients, and active degree. Fixed input normalization is absorbed into $z$ in the representation formulas; residual evaluation must retain the corresponding chain-rule factors. Define the complete total-degree index set $\mathcal A_p=\{\alpha\in\mathbb N_0^d:|\alpha|\le p\}$ and
\begin{equation}
\xi_m=(z-\mu_m)\oslash\sigma_m,\qquad
\ell_m(z)=\sum_{\alpha\in\mathcal A_{p_m}}c_{m,\alpha}\xi_m^\alpha,
\qquad |\mathcal A_p|=\binom{d+p}{p}.
\label{eq:local-polynomial}
\end{equation}
Here $\xi^\alpha=\prod_q\xi_q^{\alpha_q}$ and $\oslash$ denotes element-wise division. A local degree change therefore changes the number of active basis terms, independently of the number of Gaussians. The described three-dimensional implementation preallocates the complete basis through total degree three, including cross terms, and masks inactive degrees.

Let $\pi_m(z)\ge0$ denote the effective responsibility used to assign evaluation points to component $m$, and write the prediction as $u_\Theta=\mathcal A_\Theta(\{\pi_m,\ell_m,g_m\}_{m=1}^M)$. The update rules use the full prediction and component responsibilities, together with local values or derivatives where required. When internal branches or channels are present, the adaptation statistics average the corresponding quantities across them. The analysis in Section~\ref{sec:recap-theory} states separately when normalized responsibilities or a normalized gated-mixture output are required.

For residual points $\mathcal C=\{z_i\}_{i=1}^{N_c}$ and value observations $\mathcal D_\Gamma=\{(z_i^\Gamma,u_i^\Gamma)\}_{i=1}^{N_\Gamma}$, define
\begin{equation}
\eta_m^{\mathrm{res}}=
\frac{\sum_i\pi_m(z_i)|r_\Theta(z_i)|^2}{\sum_i\pi_m(z_i)+\varepsilon},
\qquad
\eta_m^{\mathrm{bdry}}=
\frac{\sum_i\pi_m(z_i^\Gamma)|u_\Theta(z_i^\Gamma)-u_i^\Gamma|^2}
{\sum_i\pi_m(z_i^\Gamma)+\varepsilon}.
\label{eq:eta-res}
\end{equation}

The normalization separates the \emph{intensity} of local PDE violation from the amount of responsibility mass assigned to a component: a small unresolved region need not be overshadowed by a broad component. The stabilizer $\varepsilon>0$ attenuates scores with negligible support. These are empirical averages over the evaluation points, so their interpretation also depends on how those points are sampled. Initial and boundary observations are combined when initial conditions are imposed softly; exactly enforced conditions contribute no mismatch. The displayed data term applies to prescribed values; derivative conditions require their corresponding operator mismatch. To compare overlapping local approximations, form a $k$-nearest-neighbor graph of centers and evaluate
\begin{equation}
\begin{aligned}
J_{mn}&=\frac{\sum_{i=1}^{N_j}\pi_m(z_i)\pi_n(z_i)
\|\nabla\ell_m(z_i)-\nabla\ell_n(z_i)\|_2^2}
{\sum_{i=1}^{N_j}\pi_m(z_i)\pi_n(z_i)+\varepsilon},\\
\eta_m^{\mathrm{jump}}&=\frac{\sum_{n\in\mathcal N_k(m)}J_{mn}}{|\mathcal N_k(m)|}.
\end{aligned}
\label{eq:pair-jump}
\end{equation}

The gradients are taken with respect to the same global input coordinates, so discrepancies compare the local functions in a common coordinate system. The responsibility product weights the comparison toward overlap, with $\varepsilon$ attenuating pairs of negligible overlap mass. Despite the notation ``jump,'' this is a disagreement score between overlapping models, rather than an interface jump of the global predictor. The adaptation score is $\eta_m=\eta_m^{\mathrm{res}}+\lambda_{\mathrm{bdry}}\eta_m^{\mathrm{bdry}}+\lambda_{\mathrm{jump}}\eta_m^{\mathrm{jump}}$; the weights are nonnegative and either optional term can be disabled. These scores rank candidates; the reliability result below instead uses mass-weighted residual and condition indicators.

\subsection{Polynomial enrichment, pruning, and functional merging}
\label{sec:adaptation}

Let $\widetilde\eta=\operatorname{median}_m\eta_m$. Relative thresholding identifies components whose scores are large compared with the current representation, without prescribing a fixed absolute residual tolerance. A component satisfying $\eta_m>\tau_{\mathrm{split}}\widetilde\eta$ is eligible for $p$-enrichment, $p_m\leftarrow\min(p_m+1,p_{\max})$. When the pre-update degree satisfies $p_m<p_{\max}$, this activates preallocated coefficients while preserving its center, scale, and component count. If the newly activated coefficients are zero, enrichment preserves the predictor exactly at the update and makes additional polynomial directions available to subsequent optimization.

With $\gamma_m=\operatorname{sigmoid}(g_m)$, pruning targets $\gamma_m<\tau_{\mathrm{gate}}$, optionally also requiring $\eta_m<\tau_{\mathrm{merge}}\widetilde\eta$. Candidates are ordered by gate and then error, while retaining at least $M_{\min}$ components. This is a gate-based selection heuristic: in a normalized mixture, a small absolute gate alone does not bound a component's relative influence. Merging instead screens low-error components for both effective overlap and agreement of local predictions. At evaluation points $\{z_q\}_{q=1}^{N_e}$, use
\begin{equation}
O_{ij}=\frac1{N_e}\sum_q\pi_i(z_q)\pi_j(z_q),\qquad
D_{ij}=\frac{\sum_q\pi_i(z_q)\pi_j(z_q)|\ell_i(z_q)-\ell_j(z_q)|^2}
{\sum_q\pi_i(z_q)\pi_j(z_q)+\varepsilon}.
\label{eq:merge-similarity}
\end{equation}
Accept only pairs with $O_{ij}>\tau_{\mathrm{overlap}}$ and $D_{ij}<\tau_{\mathrm{sim}}$, rank them by decreasing overlap, increasing discrepancy, and increasing combined error, then select disjoint pairs. Define the average responsibilities $q_m=N_e^{-1}\sum_q\pi_m(z_q)$ and weights $w_i=q_i/(q_i+q_j+\varepsilon)$ and $w_j=q_j/(q_i+q_j+\varepsilon)$. 

The merged geometry uses the following regularized moment-based update:
\begin{equation}
\begin{aligned}
\mu_{ij}&=w_i\mu_i+w_j\mu_j,\\
\sigma_{ij}^{\odot2}&=w_i[\sigma_i^{\odot2}+(\mu_i-\mu_{ij})^{\odot2}]\\
&\quad+w_j[\sigma_j^{\odot2}+(\mu_j-\mu_{ij})^{\odot2}].
\end{aligned}
\label{eq:merge-moments}
\end{equation}

Here $\odot2$ denotes element-wise squaring. With $\varepsilon=0$ and positive total mass, these formulas match the weighted mean and diagonal variance; for $\varepsilon>0$, the weights sum to less than one. The same weights combine coefficients, $c_{ij}=w_ic_i+w_jc_j$, and gate probabilities, $\gamma_{ij}=w_i\gamma_i+w_j\gamma_j$, before $g_{ij}=\operatorname{logit}(\gamma_{ij})$. Any latent code is averaged likewise, and $p_{ij}=\max(p_i,p_j)$. This merge initialization is heuristic: functional agreement before merging does not ensure an unchanged predictor, because the polynomials use different local coordinates and the responsibilities also change. Appendix~\ref{app:construction} gives the full rules.

\subsection{Residual-directed splitting and coordinate-aware initialization}

\label{sec:splitting}
The remaining capacity $M_{\max}-M$, a fractional budget, and a hard per-event cap limit splitting. The geometric baseline selects the parent's largest-scale axis. The residual-directed alternative uses $\eta_m^{\mathrm{res}}$ for candidate selection and constructs
\begin{equation}
\begin{aligned}
\omega_{im}=\pi_m(z_i)|r_\Theta(z_i)|^2,\qquad
\bar z_m&=\frac{\sum_i\omega_{im}z_i}{\sum_i\omega_{im}+\varepsilon},\\
C_m&=\frac{\sum_i\omega_{im}(z_i-\bar z_m)(z_i-\bar z_m)^\top}
{\sum_i\omega_{im}+\varepsilon}.
\end{aligned}
\label{eq:residual-covariance}
\end{equation}

Choose a unit leading eigenvector $v_m\in\operatorname*{arg\,max}_{\|v\|_2=1}v^\top C_mv$. Responsibilities emphasize the region associated with the parent, while squared residual emphasizes unresolved physics within that region. The resulting direction tracks the dominant spread of residual energy in the normalized input coordinates; it need not align with a coordinate axis or the parent's largest scale. Negligible residual mass or an invalid covariance triggers the geometric fallback. A deterministic subset may be selected using residual energy and responsibility. With exact normalization by $W=\sum_i\omega_{im}>0$, this direction maximizes weighted projected variance (Proposition~\ref{prop:cov_direction}); this characterizes the geometry of the refinement, without asserting that it maximizes subsequent error reduction.

Set $s_m=(v_m^\top\operatorname{diag}(\sigma_m^2)v_m)^{1/2}$ and clamp the tentative displacement $\alpha_{\mathrm{split}}s_mv_m$ in parent-normalized coordinates. The default anchor is the trained parent center; an optional residual-weighted anchor is also clamped. The child's scale is $\operatorname{clip}(\beta_\sigma\sigma_m,\sigma_{\min},\sigma_{\max})$, with $0<\beta_\sigma<1$ and $0<\sigma_{\min}\le\sigma_{\max}$. The full displacement and fallback rules appear in Appendix~\ref{app:construction}.

Because coefficients refer to local coordinates, simply copying them changes the polynomial. Parent and child coordinates instead satisfy
\begin{equation}
\xi_p=d+b\odot\xi_c,\qquad
d=(\mu_c-\mu_p)\oslash\sigma_p,\qquad b=\sigma_c\oslash\sigma_p.
\label{eq:coordinate-transform}
\end{equation}
The multivariate binomial expansion gives
\begin{equation}
c_\beta^c=\sum_{\substack{\alpha\in\mathcal A_{p_p}\\\alpha\ge\beta}}
c_\alpha^p\binom\alpha\beta d^{\alpha-\beta}b^\beta,
\qquad \binom\alpha\beta=\prod_q\binom{\alpha_q}{\beta_q}.
\label{eq:coefficient-translation}
\end{equation}
Here $\alpha\ge\beta$ is component-wise, and the child retains the parent's active degree. The complete total-degree basis is closed under this shift and diagonal rescaling, giving $\ell_c(z)=\ell_p(z)$ (Lemma~\ref{lem:poly_translation}). For example, shifting and rescaling a linear polynomial generally changes both its slope in child coordinates and its constant coefficient; copying the coefficient vector misses this correction. Inactive higher-order coefficients must be zero or excluded, since they can translate into active lower-order terms. Translation therefore preserves the \emph{local function}; controlling the \emph{global predictor} additionally requires controlling the inserted component's weight.

The default additive split retains the parent and initializes a quiet child:
\begin{equation}
\gamma_c=\operatorname{clip}\!\left(
\min\{\rho_g\gamma_p,\gamma_0,\gamma_{\max}\},\gamma_{\min},\gamma_{\max}\right),
\qquad g_c=\operatorname{logit}(\gamma_c),\qquad \rho_g\ll1.
\label{eq:quiet-child}
\end{equation}
The bounds $0<\gamma_{\min}\le\gamma_{\max}<1$ make the logit finite. The gate is intended to limit the child's initial contribution while leaving its geometry and polynomial available for subsequent specialization. Together, translation and gating address different sources of disturbance: translation removes the coordinate-change error, while gating controls the change in aggregation. Section~\ref{sec:recap-theory} quantifies this effect for a normalized mixture. Optional symmetric replacement shifts both offspring oppositely, shrinks and translates both, and divides the gate; it changes the parent as well and is outside that bound.

\subsection{Conditional reliability and controlled adaptation}
\label{sec:recap-theory}

Residual accuracy implies solution accuracy only under stability in the chosen norms \citep{hu2021extended,jiang2026global}, not for arbitrary PDEs or losses \citep{wang2022l2}. For this analysis, let $Q$ be the PDE domain and $\Gamma$ the condition domain, equipped with probability measures $\mu_Q,\mu_\Gamma$; let $\kappa_\Gamma\ge0$. Suppose $Lu=f$, $Bu=g$, and
\begin{equation}
\|v-u\|_X^2\le C_{\mathrm{stab}}^2
\bigl(\|Lv-f\|_{L^2(\mu_Q)}^2+
\kappa_\Gamma\|Bv-g\|_{L^2(\mu_\Gamma)}^2\bigr).
\label{eq:stability_assumption}
\end{equation}
\begin{theorem}[Conditional ReCAP reliability]
\label{thm:main-reliability}
Under \eqref{eq:stability_assumption}, assume $0\le\pi_m\le1$, $\sum_m\pi_m=1$, and $\varepsilon>0$. For masses $s_m=\int_Q\pi_m\,d\mu_Q$ and $s_m^\Gamma=\int_\Gamma\pi_m\,d\mu_\Gamma$, define continuum indicators by the corresponding $\varepsilon$-regularized weighted averages of $|r|^2$ and $|b|^2$, where $r=Lu_\Theta-f$ and $b=Bu_\Theta-g$. Then
\begin{equation}
\|u_\Theta-u\|_X^2\le C_{\mathrm{stab}}^2
\left[\sum_m(s_m+\varepsilon)\eta_m^{\mathrm{res}}
+\kappa_\Gamma\sum_m(s_m^\Gamma+\varepsilon)\eta_m^{\mathrm{bdry}}\right].
\label{eq:recap-reliability}
\end{equation}
The independent-sample certificate, including its finite-sample correction terms and assumptions, is stated and proved in Theorem~\ref{thm:recap-reliability} in Appendix~\ref{sec:further_theory}.
\end{theorem}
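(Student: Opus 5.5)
The plan is to apply the stability assumption \eqref{eq:stability_assumption} with $v=u_\Theta$, and then rewrite each squared norm on the right as a sum of responsibility-weighted pieces using the partition of unity. Stability gives
\begin{equation*}
\|u_\Theta-u\|_X^2\le C_{\mathrm{stab}}^2\Bigl(\int_Q|r|^2\,d\mu_Q+\kappa_\Gamma\int_\Gamma|b|^2\,d\mu_\Gamma\Bigr).
\end{equation*}
Since $\sum_m\pi_m=1$ pointwise, we have $\int_Q|r|^2\,d\mu_Q=\sum_m\int_Q\pi_m|r|^2\,d\mu_Q$, and the same decomposition holds on $\Gamma$. Because $\pi_m\ge0$ and $|r|^2\ge0$, we may interchange the finite sum and the integral.

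Next I will make the continuum indicators explicit:
\begin{equation*}
\eta_m^{\mathrm{res}}=\frac{\int_Q\pi_m|r|^2\,d\mu_Q}{s_m+\varepsilon},\qquad
\eta_m^{\mathrm{bdry}}=\frac{\int_\Gamma\pi_m|b|^2\,d\mu_\Gamma}{s_m^\Gamma+\varepsilon}.
\end{equation*}
The denominators are strictly positive because $s_m\ge0$ and $\varepsilon>0$. This gives the identities $(s_m+\varepsilon)\eta_m^{\mathrm{res}}=\int_Q\pi_m|r|^2\,d\mu_Q$ and $(s_m^\Gamma+\varepsilon)\eta_m^{\mathrm{bdry}}=\int_\Gamma\pi_m|b|^2\,d\mu_\Gamma$. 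Substituting them into the decomposed stability bound yields \eqref{eq:recap-reliability}. In fact the bound holds with equality in the bracket, so the role of $\varepsilon$ is only to keep each indicator well defined when a mass $s_m$ vanishes. The assumption $\pi_m\le1$ is not strictly needed, though it is consistent with the other hypotheses.

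The main obstacle is not the algebra but the well-posedness bookkeeping. We need $r\in L^2(\mu_Q)$ and $b\in L^2(\mu_\Gamma)$ so that every integral is finite; if either is infinite, the inequality holds trivially. We also need the indicators to be exactly the ``corresponding $\varepsilon$-regularized weighted averages,'' meaning the same $\varepsilon$ appears in the definition of each indicator and in the weight $(s_m+\varepsilon)$ of \eqref{eq:recap-reliability}. Finally, the responsibilities must be measurable functions that form a partition of unity on both $Q$ and $\Gamma$, which I will state as a standing assumption.

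The finite-sample version in Theorem~\ref{thm:recap-reliability} would require a concentration argument for the empirical sums. Since the present statement is purely continuum, I do not need that argument here.
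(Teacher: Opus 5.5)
Your proposal is correct and follows the paper's proof: apply the stability estimate with $v=u_\Theta$, then use $(s_m+\varepsilon)\eta_m^{\mathrm{res}}=\int_Q\pi_m|r|^2\,d\mu_Q$ together with $\sum_m\pi_m=1$ to recover $\|r\|_{L^2(\mu_Q)}^2$, and do the same on $\Gamma$, so the bracket equals the stability right-hand side exactly. Your side remarks are accurate but go beyond what the paper needs: measurability and a partition of unity on both $Q$ and $\Gamma$, the redundancy of $\pi_m\le1$, and $\varepsilon$ serving only to keep each indicator well defined.
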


The key identity is $\sum_m(s_m+\varepsilon)\eta_m^{\mathrm{res}}=\|r\|_{L^2(\mu_Q)}^2$: mass weighting reconstructs the global residual norm, whereas unweighted scores identify where to adapt. The theorem transfers an assumed PDE stability estimate to these localized quantities; it does not establish that estimate for every benchmark or make the jump score an error bound. The general condition residual $b=Bu_\Theta-g$ reduces to the value mismatch in \eqref{eq:eta-res} when $B$ is a value trace. Independent certification samples are essential: reusing training points requires uniform generalization control \citep{hu2021extended} or continuous-domain verification \citep{eiras2023efficient}. The proof (Theorem~\ref{thm:recap-reliability}) and an explicit transport estimate appear in Appendix~\ref{sec:further_theory}.

\paragraph{Controlling the effect of a new child.}
For one output channel, assume a normalized gated mixture with Gaussian weights $\phi_m$:
\begin{equation}
u^-=\frac{\sum_m q_m\ell_m}{D},\qquad q_m=\gamma_m\phi_m,
\qquad D=\sum_mq_m.
\label{eq:mixture_assumption}
\end{equation}
Additive insertion of a translated child with $\ell_c=\ell_p$ and $q_c=\gamma_c\phi_c$ gives $u^+-u^-=\frac{q_c}{D+q_c}(\ell_p-u^-)$. The change depends on the child's relative weight and parent--mixture disagreement. Other aggregation rules require a separate perturbation analysis.

\begin{theorem}[Additive quiet-child stability]
\label{thm:main-quiet-child}
Let $L=\partial_t+\beta\cdot\nabla$ and use \eqref{eq:mixture_assumption}. Retain the parent unchanged and translate the child so that $\ell_c=\ell_p$. Assume the gate is constant in $z$, $0\le\gamma_c\le\epsilon_g$, and $0\le\phi_c\le1$, $D\ge d_0>0$ on $Q\cup\Gamma$, with $\|L\phi_c\|_{L^\infty(Q)}\le K_\phi$ and $\|LD\|_{L^\infty(Q)}\le K_D$. For $J(v)=\|Lv-f\|_{L^2(Q)}^2+\kappa_\Gamma\|v-g\|_{L^2(\Gamma)}^2$, put $h=\ell_p-u^-$ and $K_\alpha=K_\phi/d_0+K_D/d_0^2$. Define the finite constant
\begin{equation}
C_{\mathrm{split}}=\left[
\left(K_\alpha\|h\|_{L^2(Q)}+d_0^{-1}\|Lh\|_{L^2(Q)}\right)^2
+\kappa_\Gamma d_0^{-2}\|h\|_{L^2(\Gamma)}^2
\right]^{1/2}.
\label{eq:split-constant}
\end{equation}
The post-split predictor satisfies
\begin{equation}
\begin{aligned}
\sqrt{J(u^+)}&\le\sqrt{J(u^-)}+\epsilon_gC_{\mathrm{split}},\\
J(u^+)-J(u^-)&\le2\epsilon_gC_{\mathrm{split}}\sqrt{J(u^-)}
+\epsilon_g^2C_{\mathrm{split}}^2.
\end{aligned}
\label{eq:sqrt_loss_shock}
\end{equation}
If admissible child gates can approach zero, the refined class contains the old class in its closure, so its optimal achievable loss cannot increase.
\end{theorem}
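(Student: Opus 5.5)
The proof rests on the exact perturbation identity already noted before the theorem, $u^+-u^-=\alpha h$ with $\alpha=q_c/(D+q_c)$ and $h=\ell_p-u^-$. Given this identity, the task is to bound $\sqrt{J}$ of the perturbation. Write $J(v)=\|\mathcal R(v)\|_{\mathcal H}^2$, where $\mathcal R(v)=(Lv-f,\sqrt{\kappa_\Gamma}(v-g))$ lives in the product Hilbert space $\mathcal H=L^2(Q)\times L^2(\Gamma)$. Since $L$ is linear, $\mathcal R(u^+)=\mathcal R(u^-)+(L(\alpha h),\sqrt{\kappa_\Gamma}\,\alpha h)$. The triangle inequality in $\mathcal H$ then gives
\[
\sqrt{J(u^+)}\le\sqrt{J(u^-)}+\bigl(\|L(\alpha h)\|_{L^2(Q)}^2+\kappa_\Gamma\|\alpha h\|_{L^2(\Gamma)}^2\bigr)^{1/2}.
\]
So it suffices to show $\|L(\alpha h)\|_{L^2(Q)}\le\epsilon_g(K_\alpha\|h\|_{L^2(Q)}+d_0^{-1}\|Lh\|_{L^2(Q)})$ and $\|\alpha h\|_{L^2(\Gamma)}\le\epsilon_g d_0^{-1}\|h\|_{L^2(\Gamma)}$. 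The second inequality in \eqref{eq:sqrt_loss_shock} follows by squaring the first and subtracting $J(u^-)$.

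The boundary term is pointwise. We have $0\le\alpha=q_c/(D+q_c)\le q_c/D\le\gamma_c\phi_c/d_0\le\epsilon_g/d_0$, because $q_c\ge0$, $\phi_c\le1$ and $D\ge d_0$. For the interior term, $L$ is a first-order derivation, so the Leibniz rule gives $L(\alpha h)=(L\alpha)h+\alpha Lh$. The second piece is bounded by $(\epsilon_g/d_0)\|Lh\|_{L^2(Q)}$ exactly as above.

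The main obstacle is the pointwise bound $|L\alpha|\le\epsilon_g K_\alpha$. Since $\gamma_c$ is constant in $z$, $L q_c=\gamma_c L\phi_c$. The quotient rule gives
\[
L\alpha=\frac{\gamma_c L\phi_c\,D-\gamma_c\phi_c\,LD}{(D+q_c)^2}.
\]
Using $(D+q_c)^2\ge D^2\ge d_0D$ and $D\ge d_0$, the first term is at most $\gamma_cK_\phi/d_0$. Using $\phi_c\le1$, the second term is at most $\gamma_cK_D/d_0^2$. Hence $|L\alpha|\le\epsilon_g(K_\phi/d_0+K_D/d_0^2)=\epsilon_gK_\alpha$. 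There is one subtlety. If the theorem's $D$ is the pre-split denominator, then $L(D+q_c)=LD+\gamma_cL\phi_c$, and the quotient rule actually produces $L\alpha=\gamma_c\bigl(D\,L\phi_c-\phi_c\,LD\bigr)/(D+q_c)^2$: the $\gamma_c\phi_c L\phi_c$ cross terms cancel, so no extra $\epsilon_g^2$ term appears. I would verify this cancellation carefully. Multiplying by $h$ and applying Minkowski then gives the stated bound with $C_{\mathrm{split}}$. Finiteness of $C_{\mathrm{split}}$ is implicit in assuming $h,Lh\in L^2$; this assumption should be stated explicitly.

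For the closure claim, fix any predictor $u^-$ of the old class and take admissible child gates $\gamma_c\to0$. Then $\epsilon_g\to0$ in the bound above, so $J(u^+)\to J(u^-)$. Therefore $\inf J$ over the refined class is at most $J(u^-)$ for every old-class predictor, and the optimal achievable loss over the refined class is at most that over the old class. This argument works at fixed parent and geometry, and it needs $C_{\mathrm{split}}<\infty$ for each such $u^-$.
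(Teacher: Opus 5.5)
Your proposal is correct and follows the paper's proof essentially step for step. Both arguments use the identity $u^+-u^-=\alpha h$, the bound $0\le\alpha\le\epsilon_g/d_0$, the quotient rule $L\alpha=(D\,Lq_c-q_c\,LD)/(D+q_c)^2$ (the cancellation you flag is exactly this form), the Leibniz rule, and Minkowski in the product norm followed by squaring. Your closure step is a slightly more explicit version of the paper's: you pass the quantitative bound to the limit $\gamma_c\to0$ instead of only noting $q_c\to0$, which supplies the continuity of $J$ that the claim about optimal loss relies on.
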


The proof appears with Theorem~\ref{thm:quiet_child} in Appendix~\ref{sec:further_theory}. For controlled $C_{\mathrm{split}}$, a smaller gate limits the immediate loss increase. The constant also exposes the relevant conditions: adequate mixture coverage through $d_0$, controlled child derivatives through $K_\phi$, and agreement between the parent and current mixture through $h$. A small absolute gate alone is therefore insufficient when the denominator is very small or the child is excessively narrow. A fixed positive gate floor also precludes the zero-gate argument for class inclusion. This result concerns additive insertion for the stated transport objective, rather than monotonic training loss or a guarantee for nonlinear and higher-order benchmark operators. Zero-initialized $p$-enrichment preserves the current function exactly; symmetric replacement, pruning, and merging require separate perturbation analysis.

\section{Experimental Results}
\label{sec:results}

\paragraph{Benchmarks, baselines and implementation details.}

We evaluate ReCAP on five PDE benchmarks: flow mixing (FM3D), Helmholtz (H3D), nonlinear diffusion (ND3D), and Klein--Gordon equations (KG3D and KG4D). The dimension counts all input coordinates, including time: H3D has three spatial coordinates, FM3D, ND3D, and KG3D have two spatial coordinates and time, and KG4D has three spatial coordinates and time. Appendix~\ref{sec:dataset} provides the problem definitions. All experiments run on a single NVIDIA RTX 4090, with baselines reimplemented in the same hardware and software environment. We compare against eleven methods: PINN~\citep{raissi2019physics}, ActNet~\citep{guilhoto2026ActNet}, PirateNets~\citep{wang2024piratenets}, SPINN~\citep{cho2023separable}, JAXPI~\citep{wang2023expert}, CoupledNet~\citep{meng2026deep}, CausalPINN~\citep{wang2024respecting}, DB-PINN~\citep{zhoudbpinn}, SINN~\citep{yu2026spectralinformedneuralnetworksoutperform}, Scale-PINN~\citep{chiu2026scalepinnlearningefficientphysicsinformed}, and PIG~\citep{kang2025pig}. ReCAP hyperparameters and computational costs appear in Appendices~\ref{sec:experiment-settings} and~\ref{sec:computational-cost}, respectively. We report relative $L^2$ error to compare solution accuracy across these benchmarks.

\begin{table}[t!] 
\centering 

\caption{Reported relative $L^2$ errors on five PDE benchmarks (lower is better). The lowest error in each column is highlighted in \colorbox[HTML]{E3FFE2}{green}, and the second-lowest is underlined. The final row gives ReCAP's percentage error reduction relative to the strongest baseline, computed from the displayed values. A dash indicates an unreported result.} 
\label{tab:main_table} 
\resizebox{\linewidth}{!}{%
\begin{tabular}{lccccc} 
\toprule 
Method & FM3D & H3D & ND3D & KG3D & KG4D\\ 
\midrule 

PINN~\citep{raissi2019physics}
& 0.001968 & 0.527911 & 0.007788 & 0.042415 & 0.009683 \\ 

SPINN~\citep{cho2023separable} 
& 0.007995 & \underline{0.066970} & 0.005182 & \underline{0.003773} & 0.004786 \\ 

JAXPI~\citep{wang2023expert}
& 0.000328 & 0.110786 & 0.002229 & 0.007136 & \underline{0.003942} \\ 

PirateNets~\citep{wang2024piratenets} 
& \underline{0.000291} & 0.077632 & 0.002505 & 0.003860 & 0.007206 \\ 

CausalPINN~\citep{wang2024respecting} 
& 0.000383 & - & 0.006638 & 0.021933 & 0.008271 \\ 

DB-PINN~\citep{zhoudbpinn} 
& 0.002053 & 0.587973 & 0.005622 & 0.007994 & 0.017544 \\ 

PIG~\citep{kang2025pig} 
& 0.000305 & 0.184739 & \underline{0.001791} & 0.004206 & 0.005286 \\ 

ActNet~\citep{guilhoto2026ActNet}
& 0.001787 & 0.093405 & 0.004842 & 0.006199 & 0.005991 \\ 

CoupledNet~\citep{meng2026deep} 
& 0.000856 & 0.158846 & 0.006488 & 0.070929 & 0.009005 \\ 

SINN~\citep{yu2026spectralinformedneuralnetworksoutperform} 
& 0.003955 & 0.700347 & 0.003598 & 0.011043 & 0.059185 \\ 

Scale-PINN~\citep{chiu2026scalepinnlearningefficientphysicsinformed} 
& 0.034121 & 0.565737 & 0.082126 & 0.092426 & 0.093574 \\ 

\midrule
ReCAP (Ours) 
& \cellcolor[HTML]{E3FFE2}0.000211 & \cellcolor[HTML]{E3FFE2}0.059810 & \cellcolor[HTML]{E3FFE2}0.001574 & \cellcolor[HTML]{E3FFE2}0.003181 & \cellcolor[HTML]{E3FFE2}0.003463\\ 

Improvement 
& 27.5\% & 10.7\% & 12.1\% & 15.7\% & 12.2\%\\

\bottomrule 
\end{tabular} 
}
\end{table}
\begin{figure}[t!]
    \centering
    \includegraphics[width=\linewidth]{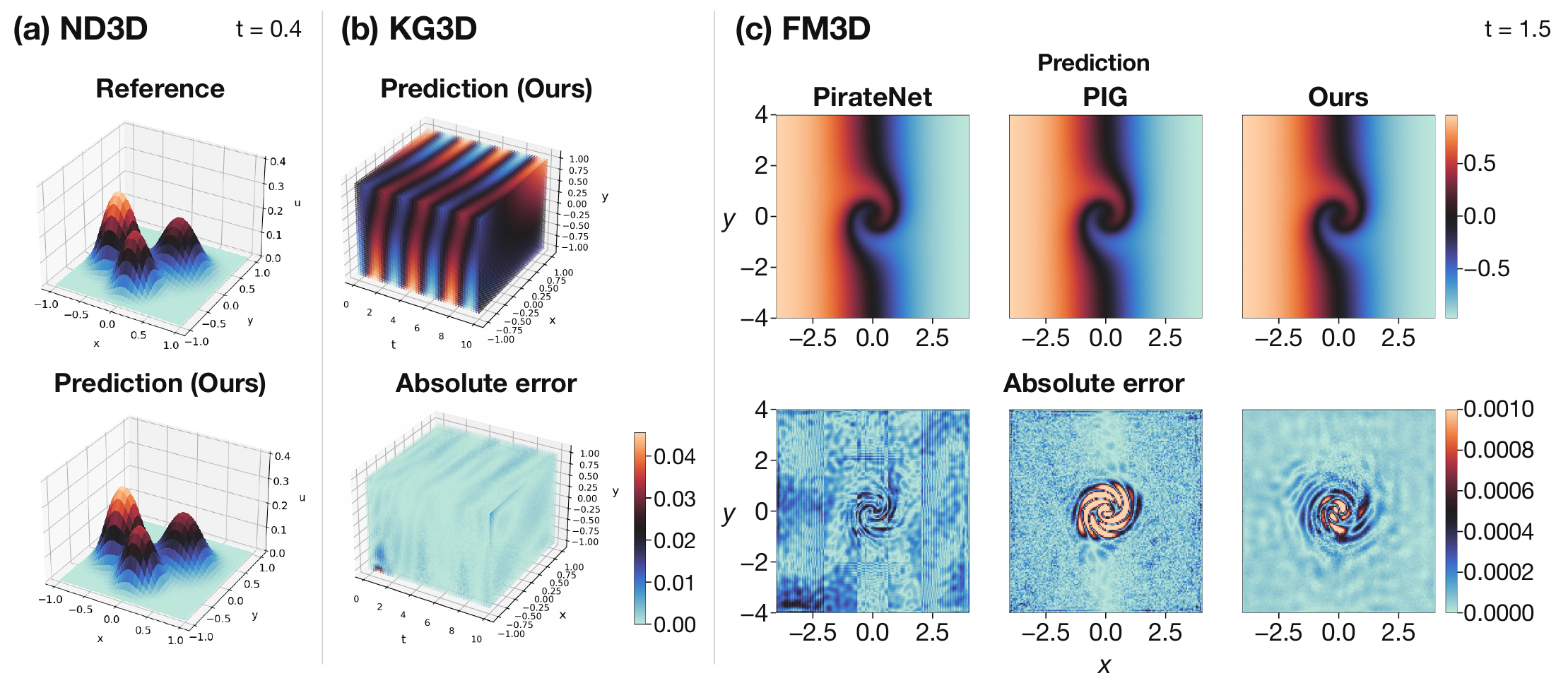}
    \caption{\textbf{Qualitative solution and error comparisons.}
    (a) ND3D at $t=0.4$: reference solution and ReCAP prediction.
    (b) KG3D: ReCAP prediction and pointwise absolute error.
    (c) FM3D at $t=1.5$: predictions and corresponding absolute-error maps for PirateNets, PIG, and ReCAP.
    ReCAP accurately resolves the dominant solution structures while maintaining small and spatially localized errors, including around rapidly varying regions.}
    \label{fig:error}
\end{figure}

\textbf{Main Results.}
ReCAP achieves the lowest reported relative $L^2$ error on each of the five benchmarks in Table~\ref{tab:main_table}. Compared with the strongest baseline on each problem, ReCAP reduces error by $27.5\%$ on FM3D (PirateNets), $10.7\%$ on H3D (SPINN), $12.1\%$ on ND3D (PIG), $15.7\%$ on KG3D (SPINN), and $12.2\%$ on KG4D (JAXPI). These comparisons establish the improvement on the tested three- and four-dimensional problems; the ablations below examine how the representation and its adaptation contribute to the observed behavior.

The strongest baseline varies across the benchmarks: PirateNets, SPINN, PIG, and JAXPI each provide the best competing result on at least one problem. ReCAP improves over these problem-specific reference points across transport, elliptic, nonlinear diffusion, and wave equations. This breadth supports its effectiveness on the tested problems; the ablations below examine whether the gains depend on adaptive allocation beyond component-count scaling.

Figure~\ref{fig:error} provides spatial evidence complementary to the aggregate error metrics. 
On ND3D, ReCAP closely reproduces the reference solution, including the localized peaks and surrounding low-amplitude structure. 
For KG3D, the predicted field preserves the pronounced spatiotemporal variations while the corresponding pointwise error remains small over most of the domain. 
The FM3D comparison further highlights the difference between methods: although all three models recover the overall spiral structure, ReCAP exhibits substantially less background error and concentrates the remaining error near the most rapidly varying region around the spiral core and fronts. 

These visualizations complement the aggregate errors by showing how ReCAP resolves localized solution structures and where prediction errors remain.
Additional solution visualizations are provided in Appendix~\ref{sec:vis}.

\begin{figure}[t!]
    \centering
    \includegraphics[width=\linewidth]{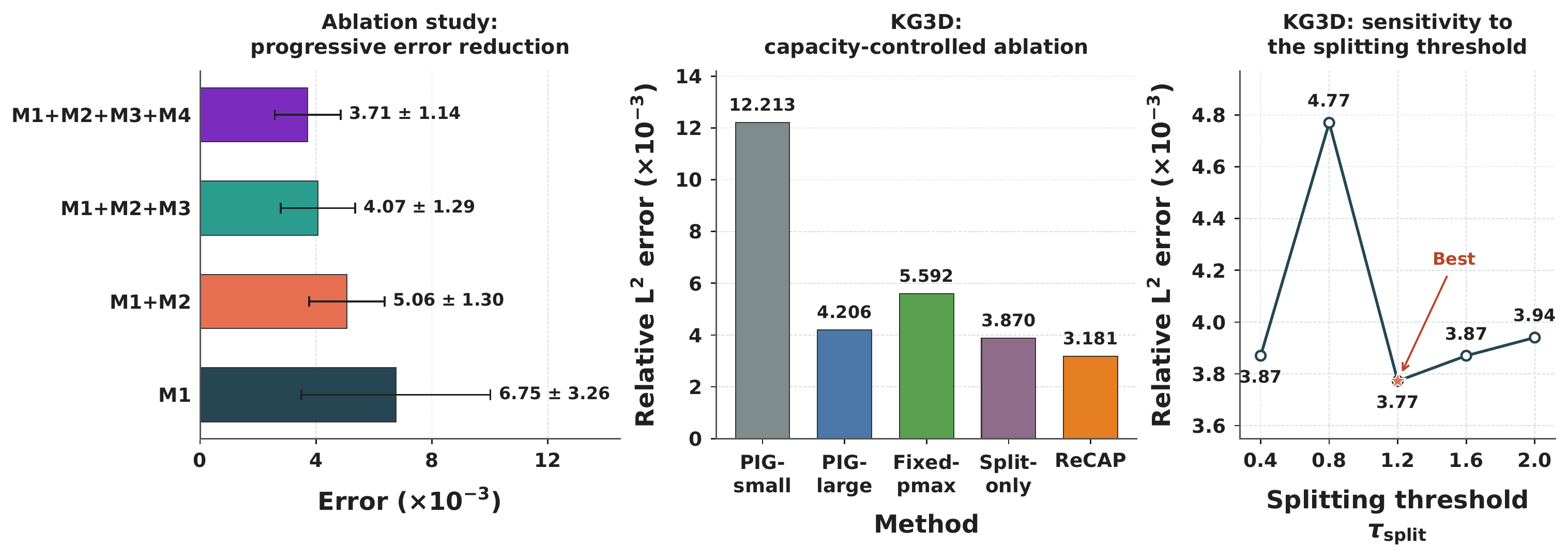}
    \caption{KG3D ablations. Left: cumulative error reduction from adding M2--M4 to the base representation M1. Middle: capacity-controlled comparisons. Right: splitting-threshold sensitivity, with $\tau_{\mathrm{split}}=1.2$ giving the lowest error among the tested values.}
    \label{fig:ab_vis}
\end{figure}

\paragraph{Ablation study.} Figure~\ref{fig:ab_vis} examines ReCAP's key design choices on KG3D, including (i) the cumulative contributions of its adaptation components, (ii) the benefits of adaptive capacity allocation beyond particle-count scaling, and (iii) sensitivity to the splitting threshold.

\paragraph{Cumulative component ablation.}
We first evaluate the incremental benefits of the adaptation mechanisms on KG3D (Figure~\ref{fig:ab_vis}, left). Starting from M1, a dynamic Gaussian-particle representation with hard constraints, we successively introduce particle splitting for local refinement (M2), pruning and merging for redundancy control (M3), and residual-based adaptive collocation sampling (M4). The relative $L^2$ error decreases at every stage, from $6.75\times10^{-3}$ to $5.06\times10^{-3}$, $4.07\times10^{-3}$, and finally $3.71\times10^{-3}$, corresponding to a $45.0\%$ reduction over M1. 

The reductions continue after splitting, supporting refinement together with redundancy control and adaptive sampling. Since the components are introduced cumulatively, the observed reductions quantify incremental gains conditional on preceding components, rather than their independent effects.

\paragraph{Adaptive allocation versus particle-count scaling.}
We next examine whether increasing the particle count is sufficient to recover ReCAP's accuracy (Figure~\ref{fig:ab_vis}, middle). PIG-small uses the initial Gaussian count, whereas PIG-large matches the final count reached by ReCAP. Two additional controls restrict how capacity is allocated: Fixed-$p_{\max}$ assigns the maximum polynomial order uniformly, while Split-only uses adaptive splitting without polynomial-order refinement or coarsening. ReCAP achieves a relative $L^2$ error of $3.181\times10^{-3}$, reducing the error by $24.4\%$ relative to PIG-large, $43.1\%$ relative to Fixed-$p_{\max}$, and $17.8\%$ relative to Split-only. Thus, matching the final particle count does not close the accuracy gap, and neither uniformly maximal polynomial order nor splitting alone matches the complete method. Together, these comparisons support ReCAP's combined strategy of local spatial refinement, polynomial-order adaptation, and redundancy control over the tested alternatives.

\paragraph{Splitting-threshold sensitivity.}
We evaluate sensitivity to the splitting threshold over
$\tau_{\mathrm{split}}\in\{0.4,0.8,1.2,1.6,2.0\}$
on KG3D (Figure~\ref{fig:ab_vis}, right). The best tested setting, $\tau_{\mathrm{split}}=1.2$, achieves a relative $L^2$ error of $3.772\times10^{-3}$, a $20.9\%$ reduction relative to $\tau_{\mathrm{split}}=0.8$. The response is non-monotonic: although $0.8$ produces the largest error, the remaining thresholds, $0.4$, $1.6$, and $2.0$, yield errors between $3.870\times10^{-3}$ and $3.940\times10^{-3}$, all within $4.5\%$ of the minimum. 

The sweep therefore favors $\tau_{\mathrm{split}}=1.2$, with several other thresholds giving similar accuracy. These values correspond to the sensitivity sweep; the main-benchmark results are reported separately in Table~\ref{tab:main_table}.

\begin{samepage}
\section{Conclusion}
ReCAP treats representational capacity as an adaptive resource for physics-informed PDE solving. It couples residual-guided polynomial enrichment and splitting with function-aware coarsening, using coordinate-aware quiet-child initialization to introduce new capacity. Across five three- and four-dimensional PDE benchmarks, ReCAP achieves the lowest reported relative $L^2$ error; capacity-controlled comparisons further support the benefit of targeted allocation beyond increasing component count alone. Under the stated assumptions, the analysis connects mass-weighted physics indicators to solution error and bounds the perturbation from additive refinement. Together, these results support physics-informed neural plasticity: jointly learning the solution and adapting where, how, and how much capacity is used to represent it.
\end{samepage}


\subsection*{AI use statement}
In this work, we used generative AI tools to assist in developing and writing proofs of mathematical claims, provide feedback on methodological exposition, and refine the interpretation and presentation of results. Additionally, we used these tools to polish the writing, improve clarity and organization, We have reviewed all AI-assisted work, including checking the assumptions, mathematical statements, and individual steps of the proofs, and verifying that the methodological descriptions and interpretations are consistent with our implementation and experimental evidence. We take responsibility for the final content of this work, including all text, claims, and artifacts produced with the aid of generative AI.

\subsection*{Acknowledgments}
AIAR gratefully acknowledges support from the Tsinghua University Dushi Program and the Tsinghua University Initiative Scientific Research Program. This work is also supported by the Yau Mathematical Sciences Center, Tsinghua University.
CWC was supported by the Swiss National Science Foundation (SNSF) under Grant No. 20HW-1 220785.

\bibliography{iclr2026_conference}

@article{tamamidis1993evaluation,
  title   = {Evaluation of Various High-Order-Accuracy Schemes with and without Flux Limiters},
  author  = {Tamamidis, Panos and Assanis, Dennis N.},
  journal = {International Journal for Numerical Methods in Fluids},
  volume  = {16},
  number  = {10},
  pages   = {931--948},
  year    = {1993},
}

@article{raissi2019physics,
  title={Physics-informed neural networks: A deep learning framework for solving forward and inverse problems involving nonlinear partial differential equations},
  author={Raissi, Maziar and Perdikaris, Paris and Karniadakis, George E.},
  journal={Journal of Computational Physics},
  volume={378},
  pages={686--707},
  year={2019},
  publisher={Elsevier},
  doi={10.1016/j.jcp.2018.10.045},
}

@article{karniadakis2021physics,
  title={Physics-informed machine learning},
  author={Karniadakis, George E and Kevrekidis, Ioannis G and Lu, Lu and Perdikaris, Paris and Wang, Sifan and Yang, Liu},
  journal={Nature Reviews Physics},
  volume={3},
  number={6},
  pages={422--440},
  year={2021},
  publisher={Nature Publishing Group},
  doi={10.1038/s42254-021-00314-5},
}

@article{jagtap2022extended,
  title={Extended physics-informed neural networks ({XPINNs}): A generalized space-time domain decomposition based deep learning framework for nonlinear partial differential equations},
  author={Jagtap, Ameya D and Karniadakis, George E},
  journal={Communications in Computational Physics},
  volume={28},
  pages={2002--2041},
  year={2020},
  number={5},
  doi={10.4208/cicp.OA-2020-0164},
}

@article{wang2021understanding,
author = {Wang, Sifan and Teng, Yujun and Perdikaris, Paris},
title = {Understanding and Mitigating Gradient Flow Pathologies in Physics-Informed Neural Networks},
journal = {SIAM Journal on Scientific Computing},
volume = {43},
number = {5},
  pages={A3055--A3081},
year = {2021},
  doi={10.1137/20M1318043},
}

@article{krishnapriyan2021characterizing,
  title={Characterizing possible failure modes in physics-informed neural networks},
  author={Krishnapriyan, Aditi S. and Gholami, Amir and Zhe, Shandian and Kirby, Robert and Mahoney, Michael W},
  journal={Advances in Neural Information Processing Systems},
  volume={34},
  year={2021},
}

@inproceedings{kang2025pig,
  title={{PIG}: {P}hysics-{I}nformed {G}aussians as {A}daptive {P}arametric {M}esh {R}epresentations},
  author={Namgyu Kang and Jaemin Oh and Youngjoon Hong and Eunbyung Park},
  booktitle={The Thirteenth International Conference on Learning Representations},
  year={2025},
  url={https://openreview.net/forum?id=y5B0ca4mjt},
}

@article{broomhead1988multivariable,
  title={Multivariable functional interpolation and adaptive networks},
  author={Broomhead, David S. and Lowe, David},
  journal={Complex Systems},
  volume={2},
  pages={321--355},
  year={1988},
}

@inproceedings{guilhoto2026ActNet,
  title={Deep Learning Alternatives of the {Kolmogorov} Superposition Theorem},
      author={Leonardo Ferreira Guilhoto and Paris Perdikaris},
  year={2025},
  url={https://proceedings.iclr.cc/paper_files/paper/2025/hash/c2469e35d469e3c0eca09dbe484eb474-Abstract-Conference.html},
  booktitle={The Thirteenth International Conference on Learning Representations},
}

@article{wang2023expert,
  title={An Expert's Guide to Training Physics-informed Neural Networks},
  author={Wang, Sifan and Sankaran, Shyam and Wang, Hanwen and Perdikaris, Paris},
  journal={arXiv preprint arXiv:2308.08468},
  year={2023},
}

@article{cho2023separable,
  title={Separable Physics-Informed Neural Networks},
  author={Cho, Junwoo and Nam, Seungtae and Yang, Hyunmo and Yun, Seok-Bae and Hong, Youngjoon and Park, Eunbyung},
  journal={Advances in Neural Information Processing Systems},
  year={2023},
  volume={36},
  pages={23761--23788},
}

@inproceedings{meng2026deep,
  title={Deep Coupling Learning for Solving {PDEs}},
  author={Meng, Lingshi and Shi, Haosen and Pan, Sinno Jialin},
  booktitle={Proceedings of the 43rd International Conference on Machine Learning},
  year={2026},
  url={https://openreview.net/forum?id=4pfxoYPQSY},
}

@article{wang2024respecting,
  title={Respecting causality for training physics-informed neural networks},
  author={Wang, Sifan and Sankaran, Shyam and Perdikaris, Paris},
  journal={Computer Methods in Applied Mechanics and Engineering},
  volume={421},
  pages={116813},
  year={2024},
  publisher={Elsevier},
  doi={10.1016/j.cma.2024.116813},
}

@inproceedings{zhoudbpinn,
author = {Zhou, Chenhong and Chen, Jie and Yang, Zaifeng and Png, Ching Eng},
title = {Dual-balancing for physics-informed neural networks},
year = {2025},
booktitle = {Proceedings of the Thirty-Fourth International Joint Conference on Artificial Intelligence},
series = {IJCAI '25},
  pages={7164--7172},
  doi={10.24963/ijcai.2025/797},
  url={https://www.ijcai.org/proceedings/2025/797},
}

@inproceedings{
yu2026spectralinformedneuralnetworksoutperform,
title={Spectral-Informed Neural Networks Outperform Spectral methods in High-dimensional {PDE}s},
author={Tianchi Yu and Ivan Oseledets},
booktitle={Forty-third International Conference on Machine Learning},
year={2026},
url={https://openreview.net/forum?id=KAHCMPsPeI}
}

@misc{chiu2026scalepinnlearningefficientphysicsinformed,
  title={{Scale-PINN}: Learning Efficient Physics-Informed Neural Networks Through Sequential Correction},
      author={Pao-Hsiung Chiu and Jian Cheng Wong and Chin Chun Ooi and Chang Wei and Yuchen Fan and Yew-Soon Ong},
      year={2026},
      eprint={2602.19475},
      archivePrefix={arXiv},
      primaryClass={cs.CE},
      url={https://arxiv.org/abs/2602.19475}, 
}

@article{wang2022l2,
  title={Is {$L^2$} Physics Informed Loss Always Suitable for Training Physics Informed Neural Network?},
  author={Wang, Chuwei and Li, Shanda and He, Di and Wang, Liwei},
  journal={Advances in Neural Information Processing Systems},
  volume={35},
  pages={8278--8290},
  year={2022},
}

@article{hu2021extended,
  title={When do extended physics-informed neural networks ({XPINNs}) improve generalization?},
  author={Hu, Zheyuan and Jagtap, Ameya D and Karniadakis, George Em and Kawaguchi, Kenji},
  journal={SIAM Journal on Scientific Computing},
  year={2022},
  volume={44},
  number={5},
  pages={A3158--A3182},
  doi={10.1137/21M1447039},
}

@article{jiang2026global,
  title={Global Convergence of Deep Galerkin and {PINN} Methods for Solving Partial Differential Equations},
  author={Jiang, Deqing and Sirignano, Justin and Cohen, Samuel N},
  journal={SIAM Journal on Financial Mathematics},
  volume={17},
  number={2},
  pages={620--645},
  year={2026},
  publisher={SIAM},
  doi={10.1137/24M1701502},
}

@inproceedings{eiras2023efficient,
  title={Efficient Error Certification for Physics-Informed Neural Networks},
  author={Eiras, Francisco and Bibi, Adel and Bunel, Rudy R and Dvijotham, Krishnamurthy Dj and Torr, Philip and Kumar, M. Pawan},
  year={2024},
  booktitle={Proceedings of the 41st International Conference on Machine Learning},
  pages={12318--12347},
  volume={235},
  series={Proceedings of Machine Learning Research},
  publisher={PMLR},
  url={https://proceedings.mlr.press/v235/eiras24a.html},
}

@inproceedings{rahaman2019spectral,
  title={On the spectral bias of neural networks},
  author={Rahaman, Nasim and Baratin, Aristide and Arpit, Devansh and Draxler, Felix and Lin, Min and Hamprecht, Fred and Bengio, Yoshua and Courville, Aaron},
  booktitle={Proceedings of the 36th International Conference on Machine Learning},
  pages={5301--5310},
  year={2019},
  volume={97},
  series={Proceedings of Machine Learning Research},
  publisher={PMLR},
  url={https://proceedings.mlr.press/v97/rahaman19a.html},
}

@article{wu2023comprehensive,
  title={A comprehensive study of non-adaptive and residual-based adaptive sampling for physics-informed neural networks},
  author={Wu, Chenxi and Zhu, Min and Tan, Qinyang and Kartha, Yadhu and Lu, Lu},
  journal={Computer Methods in Applied Mechanics and Engineering},
  volume={403},
  pages={115671},
  year={2023},
  publisher={Elsevier},
  doi={10.1016/j.cma.2022.115671},
}

@inproceedings{daw2022mitigating,
  title={Mitigating Propagation Failures in Physics-informed Neural Networks using Retain-Resample-Release ({R3}) Sampling},
  author={Daw, Arka and Bu, Jie and Wang, Sifan and Perdikaris, Paris and Karpatne, Anuj},
  year={2023},
  booktitle={Proceedings of the 40th International Conference on Machine Learning},
  pages={7264--7302},
  volume={202},
  series={Proceedings of Machine Learning Research},
  publisher={PMLR},
  url={https://proceedings.mlr.press/v202/daw23a.html},
}

@article{wang2024piratenets,
  title={{PirateNets}: Physics-informed deep learning with residual adaptive networks},
  author={Wang, Sifan and Li, Bowen and Chen, Yuhan and Perdikaris, Paris},
  journal={Journal of Machine Learning Research},
  volume={25},
  number={402},
  pages={1--51},
  year={2024},
  url={https://jmlr.org/papers/v25/24-0313.html},
}

@article{moseley2023finite,
  title={Finite basis physics-informed neural networks ({FBPINNs}): a scalable domain decomposition approach for solving differential equations},
  author={Moseley, Ben and Markham, Andrew and Nissen-Meyer, Tarje},
  journal={Advances in Computational Mathematics},
  volume={49},
  number={4},
  pages={62},
  year={2023},
  publisher={Springer},
  doi={10.1007/s10444-023-10065-9},
}

@article{botvinick2026ab,
  title={{AB-PINNs}: Adaptive-basis physics-informed neural networks for residual-driven domain decomposition},
  author={Botvinick-Greenhouse, Jonah and Ali, Wael H and Benosman, Mouhacine and Mowlavi, Saviz},
  journal={Machine Learning: Science and Technology},
  year={2026},
  volume={7},
  pages={045025},
  doi={10.1088/2632-2153/ae8638},
}

@article{babuvska1994p,
  title={The p and h-p versions of the finite element method, basic principles and properties},
  author={Babu{\v{s}}ka, Ivo and Suri, Manil},
  journal={SIAM Review},
  volume={36},
  number={4},
  pages={578--632},
  year={1994},
  publisher={SIAM},
  doi={10.1137/1036141},
}

@article{kharazmi2021hp,
  title={{hp-VPINNs}: Variational physics-informed neural networks with domain decomposition},
  author={Kharazmi, Ehsan and Zhang, Zhongqiang and Karniadakis, George Em},
  journal={Computer Methods in Applied Mechanics and Engineering},
  volume={374},
  pages={113547},
  year={2021},
  publisher={Elsevier},
  doi={10.1016/j.cma.2020.113547},
}
\bibliographystyle{iclr2027_conference}

\clearpage
\appendix

\addtocontents{toc}{\protect\setcounter{tocdepth}{2}}

\begin{center}
\textbf{\Large Supplementary Material}\\[0.4em]
\textbf{Physics-Informed Neural Plasticity: PDE Solvers That Reshape Themselves}
\end{center}
\label{sec:appendix}

\noindent\rule{\textwidth}{0.6pt}
\tableofcontents
\noindent\rule{\textwidth}{0.6pt}

\section{Detailed Representation and Adaptation Rules}
\label{app:construction}

The following construction expands the representation and component-update rules summarized in Section~\ref{sec:method}.

\subsection{Problem Formulation}
Let \(\Omega\subset\mathbb{R}^{d}\) denote a spatial or spatiotemporal
domain, and consider a differential equation of the form
$\mathcal{N}[u](\mathbf{z}) = 0,
    \qquad
    \mathbf{z}\in\Omega$,
subject to initial or boundary conditions
$\mathcal{B}[u](\mathbf{z}) = g(\mathbf{z}),
    \qquad
    \mathbf{z}\in\partial\Omega$.
We denote the learned solution by \(u_{\Theta}\), where
\(\Theta\) contains both the continuously optimized parameters of the
base model and the parameters of the adaptive Gaussian representation.
The corresponding physics residual is $r_{\Theta}(\mathbf{z}) =
\mathcal{N}[u_{\Theta}](\mathbf{z}).$

The adaptation mechanism is independent of the precise composition of the
base training objective. We therefore denote that objective abstractly by
\(\mathcal{L}_{\mathrm{base}}(\Theta)\). Between structural adaptation
events, all currently active continuous parameters are optimized by
gradient-based updates to \(\mathcal{L}_{\mathrm{base}}\).

For example, the FM3D benchmark uses coordinates
\(\mathbf{z}=(t,x,y)\), with linear transport residual
\begin{equation}
    r_{\Theta}(t,x,y)
    =
    \partial_t u_{\Theta}(t,x,y)
    +
    a\,\partial_x u_{\Theta}(t,x,y)
    +
    b\,\partial_y u_{\Theta}(t,x,y),
    \label{app:eq:implemented-residual}
\end{equation}
\eqref{app:eq:implemented-residual} is the
residual for FM3D. For the other benchmarks, the same adaptation rules use
the corresponding PDE residual specified in Appendix~\ref{sec:dataset}.

\subsection{Adaptive Gaussian Local Representation}
\label{app:sec:representation}

At a given optimization stage, the representation contains
\(M\) Gaussian-localized components
$\mathcal{G}
=
\left\{
G_m
\right\}_{m=1}^{M}$.
For concise notation, we write
$G_m
    =
    \left(
        \boldsymbol{\mu}_m,
        \boldsymbol{\sigma}_m,
        g_m,
        \mathbf{c}_m,
        p_m
    \right)$,
where \(\boldsymbol{\mu}_m\in\mathbb{R}^{d}\) is the component center,
\(\boldsymbol{\sigma}_m\in\mathbb{R}_{+}^{d}\) is its diagonal scale,
\(g_m\in\mathbb{R}\) is a gate logit,
\(\mathbf{c}_m\) contains local polynomial coefficients, and
\(p_m\) is the currently active polynomial order. 

After absorbing the fixed coordinate normalization used by the model into
\(\mathbf{z}\), the normalized coordinate of component \(m\) is
$\boldsymbol{\xi}_m(\mathbf{z}) =
    \left(
        \mathbf{z}-\boldsymbol{\mu}_m
    \right)
    \oslash
    \boldsymbol{\sigma}_m$,
where \(\oslash\) denotes element-wise division. The component represents
a local polynomial
\begin{equation}
    \ell_m(\mathbf{z})
    =
    \sum_{\boldsymbol{\alpha}\in\mathcal{A}_{p_m}}
    c_{m,\boldsymbol{\alpha}}\,
    \boldsymbol{\xi}_m(\mathbf{z})^{\boldsymbol{\alpha}},
    \qquad
    \mathcal{A}_{p_m}
    =
    \left\{
        \boldsymbol{\alpha}\in\mathbb{N}_{0}^{d}
        :
        \lvert\boldsymbol{\alpha}\rvert\leq p_m
    \right\}.
    \label{app:eq:local-polynomial}
\end{equation}
Here,
\(\boldsymbol{\xi}^{\boldsymbol{\alpha}}
=\prod_{q=1}^{d}\xi_q^{\alpha_q}\).
In the three-dimensional setting, the basis contains all monomials up to total degree three, including quadratic and cubic cross terms, while the local order \(p_m\) determines which basis terms are active for each component.

Let \(\pi_m(\mathbf{z})\ge0\) denote the effective responsibility of component \(m\). Any additional internal branch or channel index is averaged out when computing the adaptation statistics, including responsibilities, centers, local values, and local gradients, and is omitted from the notation for clarity. Since the adaptation procedure is agnostic to the specific responsibility and output-aggregation rules, we express the model prediction abstractly as
$u_{\Theta}(\mathbf{z})
    =
    \mathcal{A}_{\Theta}
    \left(
        \left\{
            \pi_m(\mathbf{z}),
            \ell_m(\mathbf{z}),
            g_m
        \right\}_{m=1}^{M}
    \right)$.

Our adaptation procedure requires only that the base model expose the prediction \(u_{\Theta}\), the effective responsibilities \(\{\pi_m\}\), and, when necessary, the component-wise local values \(\{\ell_m\}\).

\subsection{Physics-Guided Local Error Estimation}
\label{app:sec:indicator}

The global physics loss measures aggregate equation and condition mismatch, but does not identify which components require additional capacity. We localize these quantities using the learned component responsibilities; a solution-error interpretation additionally requires the stability assumptions in Appendix~\ref{sec:further_theory}.

Let
\(\mathcal{C}=\{\mathbf{z}_i\}_{i=1}^{N_c}\)
denote the collocation set. The component-wise residual indicator is
\begin{equation}
    \eta_m^{\mathrm{res}}
    =
    \frac{
        \sum_{i=1}^{N_c}
        \pi_m(\mathbf{z}_i)
        \left|
            r_{\Theta}(\mathbf{z}_i)
        \right|^{2}
    }{
        \sum_{i=1}^{N_c}
        \pi_m(\mathbf{z}_i)
        +
        \varepsilon
    }.
    \label{app:eq:eta-res}
\end{equation}
Thus, a large value of
\(\eta_m^{\mathrm{res}}\)
indicates a large weighted PDE residual in the region for which
component \(m\) is responsible.

When value-based initial or boundary observations are available, we
add a condition-error indicator. Let
\(\mathcal D_\Gamma=\{(\mathbf{z}_i^\Gamma,u_i^\Gamma)\}_{i=1}^{N_\Gamma}\)
denote the condition points. Then
\begin{equation}
    \eta_m^{\mathrm{bdry}}
    =
    \frac{
        \sum_{i=1}^{N_\Gamma}
        \pi_m(\mathbf{z}_i^\Gamma)
        \left|
            u_{\Theta}(\mathbf{z}_i^\Gamma)
            -
            u_i^\Gamma
        \right|^{2}
    }{
        \sum_{i=1}^{N_\Gamma}
        \pi_m(\mathbf{z}_i^\Gamma)
        +
        \varepsilon
    }.
    \label{app:eq:eta-boundary}
\end{equation}
In the implementation, initial and boundary points are combined when the
initial condition is imposed softly. If the initial condition is already
hard-constrained by the model, only the remaining boundary points are used
in ~\eqref{app:eq:eta-boundary}.

Residual error alone may fail to identify neighboring components whose
local approximations are mutually inconsistent. We therefore construct a
\(k\)-nearest-neighbor graph between Gaussian centers. Let
\(\mathcal{N}_k(m)\) denote the neighbors of component \(m\). For a pair
\((m,n)\), define their overlap-weighted gradient discrepancy as
\begin{equation}
    J_{mn}
    =
    \frac{
        \sum_{i=1}^{N_j}
        \pi_m(\mathbf{z}_i)
        \pi_n(\mathbf{z}_i)
        \left\|
            \nabla \ell_m(\mathbf{z}_i)
            -
            \nabla \ell_n(\mathbf{z}_i)
        \right\|_2^2
    }{
        \sum_{i=1}^{N_j}
        \pi_m(\mathbf{z}_i)
        \pi_n(\mathbf{z}_i)
        +
        \varepsilon
    }.
    \label{app:eq:pair-jump}
\end{equation}
The component-wise inconsistency indicator is then
\begin{equation}
    \eta_m^{\mathrm{jump}}
    =
    \frac{1}{\lvert\mathcal{N}_k(m)\rvert}
    \sum_{n\in\mathcal{N}_k(m)}
    J_{mn}.
    \label{app:eq:eta-jump}
\end{equation}

The product
\(\pi_m\pi_n\)
weights the comparison toward points at which both components are relevant.
The denominator regularizer attenuates pairs with negligible overlap mass.
Gradients are taken in a common global coordinate system, including the
chain-rule factors from each component's local coordinates. This is a local
disagreement score, not a literal interface jump of the global predictor.

The complete adaptation score is
\begin{equation}
    \eta_m
    =
    \eta_m^{\mathrm{res}}
    +
    \lambda_{\mathrm{bdry}}
    \eta_m^{\mathrm{bdry}}
    +
    \lambda_{\mathrm{jump}}
    \eta_m^{\mathrm{jump}},
    \label{app:eq:eta-total}
\end{equation}
where $\lambda_{\mathrm{bdry}},\lambda_{\mathrm{jump}}\ge0$ and either optional term can be disabled. Unlike a global loss,
~\eqref{app:eq:eta-total} assigns the current physics error to
individual learned components and therefore provides a criterion for
targeted changes in representation complexity.

\subsection{Adaptive Representation Refinement and Coarsening}
\label{app:sec:adaptation}

The localized error indicators in Section~\ref{app:sec:indicator} determine how
representation capacity is redistributed across the domain. At each adaptation
event, we first compute the robust reference value
\[
    \widetilde{\eta}
    =
    \operatorname{median}
    \left\{
        \eta_m
    \right\}_{m=1}^{M}.
\]
The median limits the influence of a small number of extreme residuals and
makes the adaptation criteria relative to the current distribution of
component-wise errors. Based on these statistics, the representation is
modified through polynomial enrichment, pruning, merging, and Gaussian
splitting.

\paragraph{Polynomial-order refinement.}
Before introducing additional Gaussian components, we first attempt to increase
the approximation order of high-error components. A component satisfying
\begin{equation}
    \eta_m
    >
    \tau_{\mathrm{split}}
    \widetilde{\eta}
    \label{app:eq:p-candidate}
\end{equation}
is eligible for local polynomial enrichment, with
\begin{equation}
    p_m
    \leftarrow
    \min
    \left\{
        p_m+1,
        p_{\max}
    \right\}.
    \label{app:eq:p-refine}
\end{equation}

When the pre-update degree satisfies $p_m<p_{\max}$, this operation activates the next group of preallocated
polynomial terms without increasing the number of Gaussian components. It
therefore provides local $p$-adaptation: the active polynomial basis expands
while the Gaussian geometry and component count remain unchanged. Setting
the newly activated coefficients to zero preserves the current predictor.

\paragraph{Gate-based pruning.}

Pruning targets components with small gate probabilities. Let
\[
    \gamma_m
    =
    \operatorname{sigmoid}(g_m)
\]
denote the effective gate probability, averaged over internal branches when
applicable. Component $m$ becomes a pruning candidate when
\begin{equation}
    \gamma_m
    <
    \tau_{\mathrm{gate}}.
    \label{app:eq:prune-gate}
\end{equation}
An optional conservative criterion additionally requires the component to have
low localized error,
\[
    \eta_m
    <
    \tau_{\mathrm{merge}}
    \widetilde{\eta}.
\]

When the number of candidates exceeds the removal budget,
components with the smallest gates are prioritized, with localized error used
as an additional ordering criterion. Pruning is constrained so that at least
$M_{\min}$ components remain, preventing excessive collapse of the
representation.

\paragraph{Functional merging.}
Pruning removes weakly active components individually, whereas merging
coarsens redundant components that represent similar local functions.
Low-error components satisfying
\[
    \eta_m
    <
    \tau_{\mathrm{merge}}
    \widetilde{\eta}
\]
are considered for merging. Importantly, geometric proximity alone is not
sufficient: two components must exhibit both non-negligible effective overlap
and similar local predictions.

Using a set of evaluation points
$\{\mathbf{z}_q\}_{q=1}^{N_e}$, we define the effective overlap between
components $i$ and $j$ as
\begin{equation}
    O_{ij}
    =
    \frac{1}{N_e}
    \sum_{q=1}^{N_e}
    \pi_i(\mathbf{z}_q)
    \pi_j(\mathbf{z}_q),
    \label{app:eq:merge-overlap}
\end{equation}
and their overlap-weighted functional discrepancy as
\begin{equation}
    D_{ij}
    =
    \frac{
        \sum_{q=1}^{N_e}
        \pi_i(\mathbf{z}_q)
        \pi_j(\mathbf{z}_q)
        \left|
            \ell_i(\mathbf{z}_q)
            -
            \ell_j(\mathbf{z}_q)
        \right|^2
    }{
        \sum_{q=1}^{N_e}
        \pi_i(\mathbf{z}_q)
        \pi_j(\mathbf{z}_q)
        +
        \varepsilon
    }.
    \label{app:eq:merge-similarity}
\end{equation}
A pair is accepted only if
\begin{equation}
    O_{ij}
    >
    \tau_{\mathrm{overlap}},
    \qquad
    D_{ij}
    <
    \tau_{\mathrm{sim}}.
    \label{app:eq:merge-tests}
\end{equation}
Thus, merging is restricted to components that are simultaneously relevant in
the same region and already describe similar local functions. Candidate pairs
are ranked according to large overlap, small functional discrepancy, and small
combined error. Selected pairs are disjoint, so each component participates in
at most one merge during an adaptation event.

To construct the merged component, define the average effective responsibilities
and corresponding regularized weights as
\begin{equation}
    q_m
    =
    \frac{1}{N_e}
    \sum_{q=1}^{N_e}
    \pi_m(\mathbf{z}_q),
    \qquad
    w_i
    =
    \frac{q_i}{q_i+q_j+\varepsilon},
    \qquad
    w_j
    =
    \frac{q_j}{q_i+q_j+\varepsilon}.
    \label{app:eq:merge-weights}
\end{equation}

The merged center and diagonal variance use the following moment-based update.
It gives exact weighted moment matching when $\varepsilon=0$ and
$q_i+q_j>0$; positive $\varepsilon$ makes the weights sum to less than one:
\begin{align}
    \boldsymbol{\mu}_{ij}
    &=
    w_i\boldsymbol{\mu}_i
    +
    w_j\boldsymbol{\mu}_j,
    \label{app:eq:merge-mean}
    \\
    \boldsymbol{\sigma}_{ij}^{\odot 2}
    &=
    w_i
    \left[
        \boldsymbol{\sigma}_i^{\odot 2}
        +
        \left(
            \boldsymbol{\mu}_i-\boldsymbol{\mu}_{ij}
        \right)^{\odot 2}
    \right]
    \nonumber\\
    &\quad+
    w_j
    \left[
        \boldsymbol{\sigma}_j^{\odot 2}
        +
        \left(
            \boldsymbol{\mu}_j-\boldsymbol{\mu}_{ij}
        \right)^{\odot 2}
    \right].
    \label{app:eq:merge-variance}
\end{align}
The polynomial coefficients and gate probabilities are combined using the same
weights, while the larger polynomial order is retained:
\begin{align}
    \mathbf{c}_{ij}
    &=
    w_i\mathbf{c}_i+w_j\mathbf{c}_j,
    \\
    \gamma_{ij}
    &=
    w_i\gamma_i+w_j\gamma_j,
    \qquad
    g_{ij}
    =
    \operatorname{logit}(\gamma_{ij}),
    \\
    p_{ij}
    &=
    \max\{p_i,p_j\}.
\end{align}

Any associated latent code is likewise averaged. The functional-similarity test
in Equation~\eqref{app:eq:merge-tests} screens candidates before this heuristic
initialization. It does not imply function preservation: coefficient vectors
refer to different local coordinates, and the merged geometry and responsibilities
also differ. The quiet-child perturbation theorem does not cover this merge rule.

\paragraph{Residual-guided Gaussian splitting.}

Complementing polynomial enrichment, splitting introduces additional localized
representation capacity in unresolved regions. The number
of newly created components is limited by the remaining capacity
$M_{\max}-M$, a fractional split budget, and a hard per-adaptation cap. These
constraints prevent a single adaptation event from producing a large
architectural change.

\emph{Largest-scale direction.}
A simple geometric strategy selects a high-error parent and splits it along its
largest spatial or spatiotemporal scale. Specifically,
\begin{equation}
    k_m^\star
    =
    \arg\max_{q\in\{1,\ldots,d\}}
    \sigma_{m,q},
    \label{app:eq:largest-axis}
\end{equation}
and the corresponding split direction is
$\mathbf{v}_m=\mathbf{e}_{k_m^\star}$. This produces anisotropic refinement
along the direction in which the parent Gaussian currently has the broadest
support.

\emph{Residual-covariance direction.}
A more informative alternative uses the distribution of unresolved physics
error to determine the split direction. In this mode, split candidates are
selected using the residual-only indicator $\eta_m^{\mathrm{res}}$. For each
candidate component $m$, define its residual-weighted responsibility as
\begin{equation}
    \omega_{im}
    =
    \pi_m(\mathbf{z}_i)
    \left|
        r_{\Theta}(\mathbf{z}_i)
    \right|^2.
    \label{app:eq:residual-weight}
\end{equation}
The corresponding residual-weighted center and covariance are
\begin{align}
    \overline{\mathbf{z}}_m
    &=
    \frac{
        \sum_i
        \omega_{im}\mathbf{z}_i
    }{
        \sum_i\omega_{im}
        +
        \varepsilon
    },
    \label{app:eq:residual-center}
    \\
    \mathbf{C}_m
    &=
    \frac{
        \sum_i
        \omega_{im}
        \left(
            \mathbf{z}_i-\overline{\mathbf{z}}_m
        \right)
        \left(
            \mathbf{z}_i-\overline{\mathbf{z}}_m
        \right)^\top
    }{
        \sum_i\omega_{im}
        +
        \varepsilon
    }.
    \label{app:eq:residual-covariance}
\end{align}
The split direction is chosen as the principal eigenvector
\begin{equation}
    \mathbf{v}_m
    \in
    \arg\max_{\|\mathbf{v}\|_2=1}
    \mathbf{v}^{\top}
    \mathbf{C}_m
    \mathbf{v}.
    \label{app:eq:principal-direction}
\end{equation}
Hence, refinement is oriented along the direction in which the
responsibility-weighted residual exhibits the largest spread, rather than
along a random or purely geometric direction. If the residual mass is
negligible or the covariance is numerically invalid, the implementation falls
back to a geometric direction, typically the largest-scale axis. When only a
subset of residual points is used, these points can additionally be selected
deterministically according to their residual energy and responsibility, avoiding additional randomness from point subsampling.

For either splitting strategy, we define the directional scale of the parent
by
\begin{equation}
    s_m
    =
    \sqrt{
        \mathbf{v}_m^\top
        \operatorname{diag}
        \left(
            \boldsymbol{\sigma}_m^{\odot 2}
        \right)
        \mathbf{v}_m
    }.
    \label{app:eq:directional-scale}
\end{equation}
A tentative displacement is
$\alpha_{\mathrm{split}}s_m\mathbf{v}_m$. To prevent the newborn component
from being placed far outside the effective support of its parent, we apply the
scale-normalized clamp
\[
    \operatorname{Clamp}_{\boldsymbol{\sigma}}
    (\boldsymbol{\delta})
    =
    \boldsymbol{\delta}
    \min
    \left\{
        1,\,
        \frac{
            \delta_{\max}
        }{
            \|
                \boldsymbol{\delta}
                \oslash
                \boldsymbol{\sigma}
            \|_2
            +
            \varepsilon
        }
    \right\}.
\]
The final displacement is therefore
\begin{equation}
    \Delta\boldsymbol{\mu}_m
    =
    \operatorname{Clamp}_{\boldsymbol{\sigma}_m}
    \left(
        \alpha_{\mathrm{split}}
        s_m\mathbf{v}_m
    \right).
    \label{app:eq:split-displacement}
\end{equation}

By default, the trained parent center is used as the split anchor,
$\mathbf{a}_m=\boldsymbol{\mu}_m$. The residual-weighted center
$\overline{\mathbf{z}}_m$ may alternatively be used, while constraining its
displacement from the parent in normalized parent-scale units. The child
geometry is then initialized as
\begin{equation}
    \boldsymbol{\mu}_{m'}
    =
    \mathbf{a}_m
    +
    \Delta\boldsymbol{\mu}_m,
    \qquad
    \boldsymbol{\sigma}_{m'}
    =
    \operatorname{clip}
    \left(
        \beta_{\sigma}
        \boldsymbol{\sigma}_m,\,
        \sigma_{\min},\,
        \sigma_{\max}
    \right),
    \label{app:eq:child-geometry}
\end{equation}
where $0<\beta_\sigma<1$ and $0<\sigma_{\min}\le\sigma_{\max}$. Reducing the scale of the child enables it to
specialize to a smaller unresolved region.

\paragraph{Near-function-preserving split initialization.}
A structural refinement should ideally increase future representation capacity
without causing a large instantaneous perturbation to the learned solution.
Simply copying the parent polynomial coefficients to a displaced and rescaled
child does not achieve this, because the coefficients are defined relative to
the component's normalized local coordinates.

Let the parent and child coordinates be
\[
    \boldsymbol{\xi}_p
    =
    \left(
        \mathbf{z}-\boldsymbol{\mu}_p
    \right)
    \oslash
    \boldsymbol{\sigma}_p,
    \qquad
    \boldsymbol{\xi}_c
    =
    \left(
        \mathbf{z}-\boldsymbol{\mu}_c
    \right)
    \oslash
    \boldsymbol{\sigma}_c.
\]
These coordinates satisfy the affine relation
\begin{equation}
    \boldsymbol{\xi}_p
    =
    \mathbf{d}
    +
    \mathbf{b}
    \odot
    \boldsymbol{\xi}_c,
    \qquad
    \mathbf{d}
    =
    \left(
        \boldsymbol{\mu}_c-\boldsymbol{\mu}_p
    \right)
    \oslash
    \boldsymbol{\sigma}_p,
    \qquad
    \mathbf{b}
    =
    \boldsymbol{\sigma}_c
    \oslash
    \boldsymbol{\sigma}_p.
    \label{app:eq:coordinate-transform}
\end{equation}

Suppose the active parent polynomial is
\[
    \ell_p(\boldsymbol{\xi}_p)
    =
    \sum_{\boldsymbol{\alpha}\in\mathcal{A}_{p}}
    c^{p}_{\boldsymbol{\alpha}}
    \boldsymbol{\xi}_p^{\boldsymbol{\alpha}}.
\]
Substituting Equation~\eqref{app:eq:coordinate-transform} and applying the
multi-index binomial expansion yields the child coefficients
\begin{equation}
    c^{c}_{\boldsymbol{\beta}}
    =
    \sum_{\substack{
        \boldsymbol{\alpha}\in\mathcal{A}_{p}\\
        \boldsymbol{\alpha}\succeq\boldsymbol{\beta}
    }}
    c^{p}_{\boldsymbol{\alpha}}
    \binom{\boldsymbol{\alpha}}{\boldsymbol{\beta}}
    \mathbf{d}^{\boldsymbol{\alpha}-\boldsymbol{\beta}}
    \mathbf{b}^{\boldsymbol{\beta}},
    \label{app:eq:coefficient-translation}
\end{equation}
where
\[
    \binom{\boldsymbol{\alpha}}{\boldsymbol{\beta}}
    =
    \prod_{q=1}^{d}
    \binom{\alpha_q}{\beta_q},
\]
and $\boldsymbol{\alpha}\succeq\boldsymbol{\beta}$ denotes component-wise
inequality. For a complete monomial basis through the active degree, this
translation makes the child polynomial represent the same function of the
global coordinate as the parent polynomial at initialization.

Preserving the local polynomial does not, however, guarantee that the complete
model prediction is exactly unchanged because the output aggregation also
depends on the Gaussian responsibilities and gates. To further suppress the
perturbation caused by inserting the child, the default refinement leaves the
trained parent unchanged and initializes the child with a small gate
probability. Let $\gamma_p=\operatorname{sigmoid}(g_p)$. The child probability
is initialized schematically as
\begin{equation}
    \gamma_c
    =
    \operatorname{clip}
    \left(
        \min
        \left\{
            \rho_g\gamma_p,\,
            \gamma_0,\,
            \gamma_{\max}
        \right\},
        \gamma_{\min},\,
        \gamma_{\max}
    \right),
    \qquad
    g_c
    =
    \operatorname{logit}(\gamma_c),
    \label{app:eq:quiet-child}
\end{equation}
where $\rho_g\ll1$ and $0<\gamma_{\min}\le\gamma_{\max}<1$. Under the normalized-mixture assumptions of
Theorem~\ref{thm:quiet_child}, the perturbation is bounded in terms of the
child gate, mixture coverage, and component derivatives. The update aims for
\begin{equation}
    u_{\Theta^{+}}(\mathbf{z})
    \approx
    u_{\Theta^{-}}(\mathbf{z})
    \label{app:eq:near-preserving}
\end{equation}
at the instant of insertion. Subsequent gradient updates can then gradually
activate and specialize the newborn component in the unresolved region.

The implementation additionally supports a symmetric replacement strategy in
which the parent and child are placed on opposite sides of the split anchor,
both scales are reduced, both local polynomials are translated into their new
coordinate systems, and the original gate probability is divided between
them.
The additive quiet-child strategy is the stated default: it preserves the
trained parent parameters, and its instantaneous perturbation can be bounded
under the assumptions in Theorem~\ref{thm:quiet_child}.

\section{Further Theoretical Results}\label{sec:further_theory}
\label{app:sec:recap-theory}

We provide an a posteriori interpretation of the localized ReCAP indicators and show that the
default additive Gaussian split produces a controlled instantaneous perturbation. The analysis separates
three questions: whether the local scores control solution error, whether the residual-covariance direction
maximizes residual-weighted projected coordinate variance, and how inserting a new component perturbs the current
physics objective. The stability premise below follows the standard principle that residual accuracy implies
solution accuracy only when the underlying PDE is stable in the chosen residual norm
\citep{hu2021extended,jiang2026global}. This premise is not automatic for arbitrary PDEs or losses
\citep{wang2022l2}. The full statements and proofs are given below.

Let $u$ solve $\mathcal Lu=f$ in $Q$ and $\mathcal Bu=g$ on $\Gamma$. Let $\mu_Q,\mu_\Gamma$ be probability measures, $\kappa_\Gamma\ge0$, and $\varepsilon>0$. For a candidate $v$, assume
\begin{equation}
\|v-u\|_X^2\le C_{\rm stab}^2\left(
\|\mathcal Lv-f\|_{L^2(\mu_Q)}^2+
\kappa_\Gamma\|\mathcal Bv-g\|_{L^2(\mu_\Gamma)}^2
\right).
\label{app:eq:stability_assumption}
\end{equation}
Let the effective Gaussian responsibilities satisfy
$0\le\pi_m\le1$ and $\sum_{m=1}^M\pi_m=1$. Define
\begin{align}
s_m&=\int_Q\pi_m\,d\mu_Q,&
\eta_m^{\rm res}&=
\frac{\int_Q\pi_m|r_\Theta|^2\,d\mu_Q}{s_m+\varepsilon},\\
s_m^\Gamma&=\int_\Gamma\pi_m\,d\mu_\Gamma,&
\eta_m^{\rm bdry}&=
\frac{\int_\Gamma\pi_m|b_\Theta|^2\,d\mu_\Gamma}{s_m^\Gamma+\varepsilon},
\end{align}
where $r_\Theta=\mathcal Lu_\Theta-f$ and $b_\Theta=\mathcal Bu_\Theta-g$.

\begin{theorem}[Mass-weighted ReCAP reliability certificate]
\label{thm:recap-reliability}
Under \eqref{app:eq:stability_assumption} and the partition-of-unity condition,
\begin{equation}
\|u_\Theta-u\|_X^2
\le C_{\rm stab}^2\left[
\sum_{m=1}^M(s_m+\varepsilon)\eta_m^{\rm res}
+\kappa_\Gamma\sum_{m=1}^M(s_m^\Gamma+\varepsilon)\eta_m^{\rm bdry}
\right].
\label{app:eq:recap-reliability}
\end{equation}

Moreover, freeze $u_\Theta$ and draw independent certification samples
$Z_i\sim\mu_Q$, $i=1,\ldots,N_c$, and
$Y_j\sim\mu_\Gamma$, $j=1,\ldots,N_\Gamma$. For $N_c,N_\Gamma\ge1$, let the empirical masses and indicators be defined by replacing
the integrals above with sample sums, and set
\begin{equation}
\widehat{\mathcal E}_{\mathrm{ReCAP}}^2
=\frac1{N_c}\sum_m(\widehat s_m+\varepsilon)\widehat\eta_m^{\rm res}
+\frac{\kappa_\Gamma}{N_\Gamma}
\sum_m(\widehat s_m^\Gamma+\varepsilon)\widehat\eta_m^{\rm bdry}.
\end{equation}
If $|r_\Theta|\le R_\infty$ on $Q$ and $|b_\Theta|\le G_\infty$ on $\Gamma$, then, for $0<\delta<1$, with probability at least
$1-\delta$,
\begin{equation}
\|u_\Theta-u\|_X
\le C_{\rm stab}\left[
\widehat{\mathcal E}_{\mathrm{ReCAP}}^2
+R_\infty^2\sqrt{\frac{\log(4/\delta)}{2N_c}}
+\kappa_\Gamma G_\infty^2\sqrt{\frac{\log(4/\delta)}{2N_\Gamma}}
\right]^{1/2}.
\label{app:eq:recap-sample-certificate}
\end{equation}
\end{theorem}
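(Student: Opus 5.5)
The first inequality reduces to an exact identity followed by the stability premise \eqref{app:eq:stability_assumption}. By definition, $(s_m+\varepsilon)\eta_m^{\rm res}=\int_Q\pi_m|r_\Theta|^2\,d\mu_Q$; the factor $\varepsilon$ cancels exactly, which is why the regularized indicators appear multiplied by $s_m+\varepsilon$. Summing over $m$ and using $\sum_m\pi_m=1$ (with nonnegativity, so Tonelli permits exchanging sum and integral) gives
\[
\sum_m(s_m+\varepsilon)\eta_m^{\rm res}=\int_Q|r_\Theta|^2\,d\mu_Q=\|\mathcal Lu_\Theta-f\|_{L^2(\mu_Q)}^2 .
\]
The analogous identity holds on $\Gamma$ for $b_\Theta$. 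Applying \eqref{app:eq:stability_assumption} with $v=u_\Theta$ then yields \eqref{app:eq:recap-reliability} with equality in the bracket.

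For the sample certificate, the same cancellation holds empirically. Let $\widehat s_m=\sum_i\pi_m(Z_i)$ be the unnormalized empirical mass, which is why the factor $1/N_c$ appears in $\widehat{\mathcal E}^2_{\mathrm{ReCAP}}$. Then
\[
\frac1{N_c}\sum_m(\widehat s_m+\varepsilon)\widehat\eta_m^{\rm res}=\frac1{N_c}\sum_{i=1}^{N_c}|r_\Theta(Z_i)|^2 ,
\]
which is the empirical mean of the i.i.d.\ variables $X_i=|r_\Theta(Z_i)|^2$. Because $u_\Theta$ is frozen and the samples are independent of it, each $X_i$ lies in $[0,R_\infty^2]$ and has mean $\|r_\Theta\|^2_{L^2(\mu_Q)}$. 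If the indicators instead use normalized masses, I would adjust the normalization so the identity still holds. The same reasoning applies to $Y_j$ with bound $G_\infty^2$.

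Next I apply the one-sided Hoeffding inequality to each family at level $\delta/2$. The population mean exceeds the empirical mean by more than $R_\infty^2\sqrt{\log(2/\delta')/(2N_c)}$ with probability at most $\delta'$. Taking $\delta'=\delta/2$ gives a deviation term with $\log(4/\delta)$, matching the stated constant. (A two-sided bound at $\delta/4$ per family would also give $\log(8/\delta)$-type constants, so I would use the one-sided version.) A union bound over the two families gives probability at least $1-\delta$ that both upper deviations hold simultaneously. On that event, I substitute both into the first part, multiply the boundary deviation by $\kappa_\Gamma\ge0$, and take square roots to obtain \eqref{app:eq:recap-sample-certificate}.

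I expect the only delicate step to be bookkeeping rather than analysis. The $\varepsilon$-regularization must cancel exactly in both the continuum and empirical versions, and the empirical masses must be normalized consistently with the $1/N_c$ prefactor. Measurability of $r_\Theta$ and $b_\Theta$, which Hoeffding's inequality requires, I take as implicit in the assumption that they are bounded.
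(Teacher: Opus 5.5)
Your proposal is correct and follows the paper's proof essentially step for step. The $\varepsilon$-cancellation together with $\sum_m\pi_m=1$ turns the mass-weighted indicators into $\|r_\Theta\|_{L^2(\mu_Q)}^2$; with unnormalized sample-sum masses, the empirical version becomes the mean of $|r_\Theta(Z_i)|^2$. Hoeffding at level $\delta/2$ per family, a union bound, and the stability premise with $v=u_\Theta$ then give the certificate.

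One harmless bookkeeping slip concerns the Hoeffding constant. The one-sided deviation at level $\delta'$ is $R_\infty^2\sqrt{\log(1/\delta')/(2N_c)}$, whereas your $\log(2/\delta')$ threshold is the two-sided one. Your bound is therefore still valid, and at $\delta'=\delta/2$ it reproduces the stated $\log(4/\delta)$ exactly. Your parenthetical suggesting a two-sided bound would need level $\delta/4$ is unnecessary, since $\delta/2$ per family already suffices for the union bound.
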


\begin{proof}
By definition and the partition-of-unity identity,
\begin{align}
\sum_m(s_m+\varepsilon)\eta_m^{\rm res}
&=\sum_m\int_Q\pi_m|r_\Theta|^2\,d\mu_Q
=\int_Q\left(\sum_m\pi_m\right)|r_\Theta|^2\,d\mu_Q
=\|r_\Theta\|_{L^2(\mu_Q)}^2.
\end{align}
The same calculation gives
$\sum_m(s_m^\Gamma+\varepsilon)\eta_m^{\rm bdry}
=\|b_\Theta\|_{L^2(\mu_\Gamma)}^2$.
Substitution into \eqref{app:eq:stability_assumption} proves \eqref{app:eq:recap-reliability}.

For every realization of the certification samples, the same algebra gives
\begin{equation}
\frac1{N_c}\sum_m(\widehat s_m+\varepsilon)\widehat\eta_m^{\rm res}
=\frac1{N_c}\sum_{i=1}^{N_c}|r_\Theta(Z_i)|^2,
\end{equation}
and likewise on $\Gamma$. Since the predictor is frozen before the certification samples are drawn,
Hoeffding's inequality yields, with probability at least $1-\delta/2$,
\begin{equation}
\|r_\Theta\|_{L^2(\mu_Q)}^2
\le \frac1{N_c}\sum_i|r_\Theta(Z_i)|^2
+R_\infty^2\sqrt{\frac{\log(4/\delta)}{2N_c}},
\end{equation}
and, with probability at least $1-\delta/2$,
\begin{equation}
\|b_\Theta\|_{L^2(\mu_\Gamma)}^2
\le \frac1{N_\Gamma}\sum_j|b_\Theta(Y_j)|^2
+G_\infty^2\sqrt{\frac{\log(4/\delta)}{2N_\Gamma}}.
\end{equation}
A union bound and \eqref{app:eq:stability_assumption} prove \eqref{app:eq:recap-sample-certificate}.
\end{proof}

If $\kappa_\Gamma=0$, the condition terms and their sample-count requirements may be omitted.

Theorem~\ref{thm:recap-reliability} shows that mass-weighted local indicators reconstruct the global
physics loss. The unweighted component score remains useful for ranking refinements, but it is not itself a
global certificate. The independent-sample requirement is essential: reusing training points requires a uniform
generalization argument such as those developed for PINNs and XPINNs \citep{hu2021extended}, or a continuous-domain
verification method \citep{eiras2023efficient}.

\paragraph{Explicit transport stability.}
For the linear transport operator $\mathcal L=\partial_t+\beta\cdot\nabla$ on a space--time cylinder $Q=(0,T)\times\Omega_x$, let
$\Gamma_-=\{x\in\partial\Omega_x:\beta(x)\cdot n(x)<0\}$,
$\Sigma_-=(0,T)\times\Gamma_-$, where $n$ is the outward unit normal, and
$c_\beta=1+\|\nabla\cdot\beta\|_{L^\infty(\Omega_x)}$. If
$e=u_\Theta-u$, $r=\mathcal Le$, $e_0=e(0,\cdot)$, and $e_-=e|_{\Sigma_-}$, then
\begin{equation}
\|e\|_{L^2(Q)}^2
\le Te^{c_\beta T}\left(
\|e_0\|_{L^2(\Omega_x)}^2+
\|r\|_{L^2(Q)}^2+
\|e_-\|_{L^2(\Sigma_-;|\beta\cdot n|)}^2
\right).
\label{app:eq:transport_stability}
\end{equation}
Indeed, multiplying $\partial_te+\beta\cdot\nabla e=r$ by $2e$, integrating by parts, discarding the
nonnegative outflow term, and applying Young's inequality gives
\begin{equation}
\frac{d}{dt}\|e(t)\|_{L^2(\Omega_x)}^2
\le c_\beta\|e(t)\|_{L^2(\Omega_x)}^2
+\|r(t)\|_{L^2(\Omega_x)}^2
+\int_{\Gamma_-}|\beta\cdot n|e_-^2\,dS.
\end{equation}
Gronwall's inequality followed by integration over $t\in(0,T)$ proves \eqref{app:eq:transport_stability}.
The calculation assumes sufficient regularity for these derivatives and traces.
The displayed estimate uses volume and flux-weighted boundary measures;
normalizing these to probability measures rescales the stability constants.

\begin{proposition}[Optimal residual-covariance direction]
\label{prop:cov_direction}
For a selected component, define
$\omega_i=\pi_m(z_i)|r_\Theta(z_i)|^2$, $W=\sum_i\omega_i>0$,
$\bar z=W^{-1}\sum_i\omega_i z_i$, and
$C=W^{-1}\sum_i\omega_i(z_i-\bar z)(z_i-\bar z)^\top$.
If $v_1$ is a unit eigenvector associated with $\lambda_{\max}(C)$, then
\begin{equation}
v_1\in\operatorname*{arg\,max}_{\|v\|_2=1}
\frac1W\sum_i\omega_i\bigl(v^\top(z_i-\bar z)\bigr)^2.
\end{equation}
\end{proposition}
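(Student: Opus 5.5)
The plan is to reduce the weighted projected variance to a Rayleigh quotient of $C$ and then apply the standard extremal characterization of the largest eigenvalue of a symmetric positive semidefinite matrix. For any $v\in\mathbb R^d$, linearity of the weighted sum gives
\begin{equation}
\frac1W\sum_i\omega_i\bigl(v^\top(z_i-\bar z)\bigr)^2
=\frac1W\sum_i\omega_i\, v^\top(z_i-\bar z)(z_i-\bar z)^\top v
=v^\top C v.
\end{equation}
The objective over the unit sphere is therefore exactly the quadratic form $v\mapsto v^\top Cv$. Because $W>0$ is assumed, the normalization is exact, with no $\varepsilon$ regularization. This is precisely why the proposition is stated with $W$ rather than $W+\varepsilon$.

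Next, note that $C$ is real symmetric, since it is a sum of symmetric outer products with nonnegative weights $\omega_i\ge0$. It is also positive semidefinite, although only symmetry is needed. By the spectral theorem, $C=U\Lambda U^\top$ with $U$ orthogonal and $\Lambda=\operatorname{diag}(\lambda_1,\ldots,\lambda_d)$, where $\lambda_1=\lambda_{\max}(C)$. For a unit vector $v$, write $w=U^\top v$; then $\|w\|_2=1$ and
\begin{equation}
v^\top Cv=\sum_k\lambda_k w_k^2\le\lambda_{\max}(C)\sum_k w_k^2=\lambda_{\max}(C).
\end{equation}
Hence the maximum over the sphere is at most $\lambda_{\max}(C)$.

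Finally, any unit eigenvector $v_1$ with $Cv_1=\lambda_{\max}(C)v_1$ satisfies $v_1^\top Cv_1=\lambda_{\max}(C)\,v_1^\top v_1=\lambda_{\max}(C)$. It therefore attains the upper bound and lies in the argmax set. This holds even when $\lambda_{\max}$ is repeated, since we only claim membership, not uniqueness.

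No step is a genuine obstacle. The only point needing care is the first identity: one must check that the centering at $\bar z$ matches the definition of $C$, so that no cross term involving $\sum_i\omega_i(z_i-\bar z)$ is introduced. This is immediate because the same centered vectors appear on both sides.
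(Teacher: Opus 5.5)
Your proof is correct and follows the paper's argument essentially verbatim: identify the objective with the quadratic form $v^\top Cv$, diagonalize $C=V\Lambda V^\top$, bound $\sum_k\lambda_k a_k^2\le\lambda_{\max}(C)$ on the unit sphere, and note that equality holds at a leading eigenvector. One side remark is inaccurate but harmless: the identity with $v^\top Cv$ would hold under any common normalization, including $W+\varepsilon$, so exact normalization matters only for reading $C$ as the true weighted covariance, not for the argmax claim.
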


\begin{proof}
The objective equals $v^\top Cv$. Writing $C=V\Lambda V^\top$ and $a=V^\top v$ gives
$v^\top Cv=\sum_k\lambda_k a_k^2\le\lambda_{\max}(C)\sum_k a_k^2
=\lambda_{\max}(C)$, with equality for a leading eigenvector.
\end{proof}

We next quantify the stability of the default additive split. Assume one output channel has the normalized
gated mixture form
\begin{equation}
u^-(z)=\frac{\sum_{m=1}^Mq_m(z)\ell_m(z)}{D(z)},
\qquad q_m=\gamma_m\phi_m,
\qquad D=\sum_{m=1}^Mq_m.
\label{app:eq:mixture_assumption}
\end{equation}
This is the aggregation assumption for the perturbation result below.
A different decoder, branch combination, or hard-constraint transformation
requires bounds for the corresponding output map.

\begin{lemma}[Exact polynomial translation]
\label{lem:poly_translation}
Let $\xi_p=(z-\mu_p)\oslash\sigma_p$ and $\xi_c=(z-\mu_c)\oslash\sigma_c$.
Set $d=(\mu_c-\mu_p)\oslash\sigma_p$ and $b=\sigma_c\oslash\sigma_p$. If
$\ell_p=\sum_{|\alpha|\le p}c_\alpha^p\xi_p^\alpha$ and the complete monomial basis through degree $p$ is
used, then the child coefficients
\begin{equation}
c_\beta^c=
\sum_{\substack{|\alpha|\le p\\\alpha\succeq\beta}}
c_\alpha^p\binom{\alpha}{\beta}d^{\alpha-\beta}b^\beta
\label{app:eq:translated_coef}
\end{equation}
satisfy $\ell_c(z)=\ell_p(z)$ for every $z$.
\end{lemma}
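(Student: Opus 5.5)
The plan is to substitute the affine coordinate relation \eqref{app:eq:coordinate-transform} directly into the parent polynomial and expand monomial by monomial. First, verify the relation itself. From $z=\mu_c+\sigma_c\odot\xi_c$ we get
\[
\xi_p=(\mu_c-\mu_p+\sigma_c\odot\xi_c)\oslash\sigma_p=d+b\odot\xi_c .
\]
This holds componentwise for every $z$, because all operations are elementwise and $\sigma_p>0$.

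Next, fix a multi-index $\alpha$ with $|\alpha|\le p$ and write
\[
\xi_p^\alpha=\prod_{q=1}^d(d_q+b_q\xi_{c,q})^{\alpha_q}.
\]
Apply the one-dimensional binomial theorem in each factor:
\[
(d_q+b_q\xi_{c,q})^{\alpha_q}=\sum_{\beta_q=0}^{\alpha_q}\binom{\alpha_q}{\beta_q}d_q^{\alpha_q-\beta_q}b_q^{\beta_q}\xi_{c,q}^{\beta_q}.
\]
Multiplying over $q$ and collecting the product of sums into a single sum over multi-indices $\beta\preceq\alpha$ gives
\[
\xi_p^\alpha=\sum_{\beta\preceq\alpha}\binom{\alpha}{\beta}d^{\alpha-\beta}b^\beta\xi_c^\beta ,
\]
with the multi-index binomial coefficient defined as in the statement. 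The convention $0^0=1$ covers components with $d_q=0$.

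Then sum against $c^p_\alpha$ and exchange the finite double sum. The set of pairs $\{(\alpha,\beta):|\alpha|\le p,\ \beta\preceq\alpha\}$ can be reindexed as the set of pairs with $\beta$ ranging over indices with $|\beta|\le p$ and $\alpha$ ranging over indices with $\alpha\succeq\beta$ and $|\alpha|\le p$. This uses the fact that $\beta\preceq\alpha$ implies $|\beta|\le|\alpha|\le p$. The inner sum is then exactly $c^c_\beta$ from \eqref{app:eq:translated_coef}, so
\[
\ell_p(z)=\sum_{|\beta|\le p}c^c_\beta\,\xi_c(z)^\beta=\ell_c(z).
\]

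The only point needing care is this closure step. It is where completeness of the total-degree basis $\mathcal A_p$ is used: every $\beta$ produced by the expansion has total degree at most $p$, so it has an active slot in the child's basis. A tensor-product or otherwise incomplete basis would fail here. For the same reason I will remark that nonzero inactive higher-order parent coefficients would feed into lower-order child terms, which is why they must be zero or excluded. Everything else is finite algebra, so I expect no analytic obstacle.
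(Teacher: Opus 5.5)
Your proposal is correct and follows the paper's proof: substitute $\xi_p=d+b\odot\xi_c$, expand each monomial by the multi-index binomial theorem, and collect the coefficient of each $\xi_c^\beta$, using completeness of the total-degree basis so that every resulting monomial has an active slot. Beyond the paper's version, you explicitly verify the affine relation and justify reindexing the finite double sum via $|\beta|\le|\alpha|\le p$; the paper leaves both steps implicit.
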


\begin{proof}
The coordinates satisfy $\xi_p=d+b\odot\xi_c$. Substituting this identity into the parent polynomial and
applying the multi-index binomial theorem gives
\begin{equation}
\ell_p
=\sum_{|\alpha|\le p}c_\alpha^p
\sum_{\beta\preceq\alpha}\binom{\alpha}{\beta}
d^{\alpha-\beta}b^\beta\xi_c^\beta.
\end{equation}
Collecting the coefficient of each $\xi_c^\beta$ gives \eqref{app:eq:translated_coef}. Completeness of the basis
ensures that every resulting monomial is present.
\end{proof}

When component-wise local orders are used, the translation must be applied only to source coefficients whose
degree is active for the parent, or all inactive stored coefficients must be zero. Otherwise an inactive
higher-order coefficient can translate into an active lower-order coefficient, and exact preservation need not
hold.

\begin{theorem}[Shock-controlled quiet-child refinement]
\label{thm:quiet_child}
Translate a new child from parent $p$ according to Lemma~\ref{lem:poly_translation}, leave the parent
unchanged, and let $q_c=\gamma_c\phi_c$ with $0\le\gamma_c\le\epsilon_g$ and $\gamma_c$ constant in $z$. Assume $\mathcal L=\partial_t+\beta\cdot\nabla$, $0\le\phi_c\le1$ on $Q\cup\Gamma$, and that the following lower bound on $D$ holds on $Q\cup\Gamma$:
\begin{equation}
D\ge d_0>0,
\qquad
\|\mathcal L\phi_c\|_{L^\infty(Q)}\le K_\phi,
\qquad
\|\mathcal LD\|_{L^\infty(Q)}\le K_D.
\end{equation}
Set $h=\ell_p-u^-$,
$K_\alpha=K_\phi/d_0+K_D/d_0^2$, and
\begin{equation}
C_{\rm split}=\left[
\left(K_\alpha\|h\|_{L^2(Q)}+d_0^{-1}\|\mathcal Lh\|_{L^2(Q)}\right)^2
+\kappa_\Gamma d_0^{-2}\|h\|_{L^2(\Gamma)}^2
\right]^{1/2}.
\end{equation}
For
$J(v)=\|\mathcal Lv-f\|_{L^2(Q)}^2+
\kappa_\Gamma\|v-g\|_{L^2(\Gamma)}^2$, the post-split predictor satisfies
\begin{align}
\sqrt{J(u^+)}&\le\sqrt{J(u^-)}+\epsilon_gC_{\rm split},
\label{app:eq:sqrt_loss_shock}\\
J(u^+)-J(u^-)&\le
2\epsilon_gC_{\rm split}\sqrt{J(u^-)}+
\epsilon_g^2C_{\rm split}^2.
\label{app:eq:loss_shock}
\end{align}
If the child gate can approach zero, then the closure of the refined hypothesis class contains the pre-split
class and its optimal achievable loss cannot increase.
\end{theorem}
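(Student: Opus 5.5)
The plan is to write the post-split predictor as an explicit perturbation of $u^-$ and bound the perturbation's contribution to $\sqrt{J}$ by the triangle inequality in the product space $L^2(Q)\times L^2(\Gamma)$.

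\textbf{Step 1: the perturbation identity.} By Lemma~\ref{lem:poly_translation} the child satisfies $\ell_c=\ell_p$. Since the parent is unchanged, the new mixture is
\[
u^+=\frac{Du^-+q_c\ell_p}{D+q_c}.
\]
Hence
\[
u^+-u^-=\alpha h,\qquad \alpha=\frac{q_c}{D+q_c},\qquad h=\ell_p-u^-.
\]
From $0\le q_c\le\epsilon_g$ (using $\phi_c\le1$) and $D\ge d_0$ we get $0\le\alpha\le q_c/D\le\epsilon_g/d_0$ on $Q\cup\Gamma$.

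\textbf{Step 2: the transport derivative of $\alpha$.} Because $\mathcal L$ is a first-order derivation, $\mathcal L(u^+-u^-)=(\mathcal L\alpha)h+\alpha\,\mathcal Lh$. Since $\gamma_c$ is constant in $z$,
\[
\mathcal L\alpha=\frac{\gamma_c\,\mathcal L\phi_c\,D-q_c\,\mathcal LD}{(D+q_c)^2}.
\]
Using $(D+q_c)^2\ge D^2\ge d_0D$ and $(D+q_c)^2\ge d_0^2$, we get
\[
|\mathcal L\alpha|\le \gamma_c\left(\frac{K_\phi}{d_0}+\frac{K_D}{d_0^2}\right)\le\epsilon_gK_\alpha.
\]
The second term uses $q_c\le\gamma_c\phi_c\le\gamma_c$.

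\textbf{Step 3: assembling the bound.} Write
\[
\sqrt{J(v)}=\bigl\|(\mathcal Lv-f,\ \sqrt{\kappa_\Gamma}(v-g))\bigr\|,
\]
a norm on $L^2(Q)\times L^2(\Gamma)$. The triangle inequality gives
\[
\sqrt{J(u^+)}\le\sqrt{J(u^-)}+\bigl\|(\mathcal L(\alpha h),\ \sqrt{\kappa_\Gamma}\,\alpha h)\bigr\|.
\]
By Step 2,
\[
\|\mathcal L(\alpha h)\|_{L^2(Q)}\le\epsilon_g\bigl(K_\alpha\|h\|_{L^2(Q)}+d_0^{-1}\|\mathcal Lh\|_{L^2(Q)}\bigr),
\]
and by Step 1, $\|\alpha h\|_{L^2(\Gamma)}\le\epsilon_gd_0^{-1}\|h\|_{L^2(\Gamma)}$. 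Together these give \eqref{app:eq:sqrt_loss_shock}. Squaring and subtracting $J(u^-)$ gives \eqref{app:eq:loss_shock}.

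\textbf{Step 4: class inclusion.} Fix any pre-split predictor and take admissible child gates $\gamma_c\to0$. Then $\epsilon_g\to0$ with $C_{\rm split}$ fixed, since $h$, $d_0$, $K_\phi$ and $K_D$ do not depend on $\gamma_c$. The same estimate shows $u^+\to u^-$ in the graph norm defining $J$, and $J(u^+)\to J(u^-)$. So every pre-split function lies in the closure of the refined class with respect to $J$. The infimum of $J$ over the refined class is therefore at most the pre-split infimum.

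\textbf{Main obstacle.} The delicate step is bounding $\mathcal L\alpha$ with exactly the constant $K_\alpha$. This needs care with the denominators, specifically using $D+q_c\ge D\ge d_0$ and $\phi_c\le1$. It also needs $\mathcal Lh\in L^2(Q)$, i.e.\ enough regularity of $\ell_p$ and $u^-$ that $C_{\rm split}$ is finite; I would state that explicitly as an assumption.
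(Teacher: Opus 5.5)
Your proposal is correct and follows the paper's proof essentially step for step: the identity $u^+-u^-=\alpha h$ with $\alpha=q_c/(D+q_c)$, the quotient-rule bound $\|\mathcal L\alpha\|_{\infty}\le\epsilon_gK_\alpha$, the product rule followed by Minkowski's inequality in the product norm, and the zero-gate limit for class inclusion. The only differences are that you spell out the denominator estimates and make the topology of the closure explicit (convergence in the graph norm defining $J$, obtained from the same quantitative bound), both of which the paper leaves implicit.
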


\begin{proof}
By Lemma~\ref{lem:poly_translation}, the new child represents $\ell_p$ in global coordinates. Directly from
\eqref{app:eq:mixture_assumption},
\begin{equation}
u^+-u^-=\alpha h,
\qquad
\alpha=\frac{q_c}{D+q_c}.
\label{app:eq:split_identity}
\end{equation}
Since $q_c\le\epsilon_g$ and $D\ge d_0$, $0\le\alpha\le\epsilon_g/d_0$. Moreover,
\begin{equation}
\mathcal L\alpha=
\frac{D\,\mathcal Lq_c-q_c\,\mathcal LD}{(D+q_c)^2}.
\end{equation}
The gate is constant with respect to $z$ at insertion, so
$\|\mathcal Lq_c\|_\infty\le\epsilon_gK_\phi$. Therefore
\begin{equation}
\|\mathcal L\alpha\|_\infty
\le\epsilon_g\left(\frac{K_\phi}{d_0}+\frac{K_D}{d_0^2}\right)
=\epsilon_gK_\alpha.
\end{equation}
Applying the product rule to \eqref{app:eq:split_identity} gives
\begin{align}
\|\mathcal L(u^+-u^-)\|_{L^2(Q)}
&\le\epsilon_g\left(
K_\alpha\|h\|_{L^2(Q)}+d_0^{-1}\|\mathcal Lh\|_{L^2(Q)}
\right),\\
\|u^+-u^-\|_{L^2(\Gamma)}
&\le\epsilon_gd_0^{-1}\|h\|_{L^2(\Gamma)}.
\end{align}
Thus the product norm of the residual and boundary perturbations is at most
$\epsilon_gC_{\rm split}$. Minkowski's inequality proves \eqref{app:eq:sqrt_loss_shock}; squaring it proves
\eqref{app:eq:loss_shock}. Finally, retaining every old parameter and sending the child gate logit to $-\infty$
makes $q_c\to0$, so every pre-split predictor lies in the closure of the refined class.
\end{proof}

Theorem~\ref{thm:quiet_child} applies to the additive, parent-preserving split. It does not directly cover the
optional symmetric replacement strategy. Likewise, pruning and merging require separate perturbation guards.
For local order enrichment, setting every newly activated polynomial coefficient to zero makes the insertion
exactly function-preserving and enlarges the hypothesis class.

\section{PDE Benchmarks} \label{sec:dataset}
We specify the governing equations, domains, and reference solutions or initial data for the five benchmarks. The suffix 3D or 4D counts all input coordinates, including time where applicable.

\subsection{(2+1)D Flow Mixing}
FM3D is a linear transport benchmark for the advection and rotational mixing of a passive scalar in a two-dimensional velocity field:
\begin{equation}
\begin{aligned}
    &u_t+a(x,y)u_x+b(x,y)u_y=0,
    \quad (x,y,t)\in[-4,4]^2\times[0,4],\\
    &a(x,y)=-\frac{v_{\mathrm{tan}}(r)}{v_{\max}}\frac{y}{r},
    \qquad b(x,y)=\frac{v_{\mathrm{tan}}(r)}{v_{\max}}\frac{x}{r},\\
    &v_{\mathrm{tan}}(r)=\operatorname{sech}^2(r)\tanh(r),
    \qquad r=\sqrt{x^2+y^2},\qquad v_{\max}=0.385.
\end{aligned}
\label{flow_mixing3d}
\end{equation}
The analytic solution is
\begin{equation}
u^\star(x,y,t)=-\tanh\!\left[\frac y2\cos(\omega(r)t)-\frac x2\sin(\omega(r)t)\right],
\qquad \omega(r)=\frac{v_{\mathrm{tan}}(r)}{r v_{\max}},
\end{equation}
with initial condition $u(x,y,0)=-\tanh(y/2)$ \citep{tamamidis1993evaluation}. The apparent singularities at $r=0$ are removable: $a(0,0)=b(0,0)=0$ and $\omega(0)=1/v_{\max}$. These values follow from $v_{\mathrm{tan}}(r)/r\to1$ as $r\to0$.

\subsection{3D Helmholtz Equation}
H3D is a stationary elliptic benchmark on $\Omega_x=[-1,1]^3$:
\begin{equation}
\Delta_{\mathbf x}u+k^2u=q\quad\text{in }\Omega_x,
\qquad u=0\quad\text{on }\partial\Omega_x.
\end{equation}
The manufactured solution and source are
\begin{equation}
u^\star(\mathbf x)=\prod_{i=1}^3\sin(a_i\pi x_i),
\qquad q(\mathbf x)=\left[k^2-\pi^2\sum_{i=1}^3a_i^2\right]u^\star(\mathbf x),
\end{equation}
where $k=1$ and $(a_1,a_2,a_3)=(4,4,3)$. Thus $q=(1-41\pi^2)u^\star$, and the prescribed homogeneous boundary condition is satisfied exactly by $u^\star$.

\subsection{(2+1)D Nonlinear Diffusion}
ND3D describes solution-dependent diffusion in two spatial dimensions and time:
\begin{equation}
\begin{aligned}
    &u_t=0.05\left(\|\nabla_{\mathbf x}u\|_2^2+u\Delta_{\mathbf x}u\right),
    \quad (x,y,t)\in[-1,1]^2\times[0,1],\\
    &u(\mathbf x,0)=0.25g\!\left(\mathbf x;0.2,0.3,\frac1{\sqrt{10}}\right)
    +0.4g\!\left(\mathbf x;-0.1,-0.5,\frac1{\sqrt{15}}\right)\\
    &\hspace{4.6em}+0.3g\!\left(\mathbf x;-0.5,0,\frac1{\sqrt{20}}\right),
\end{aligned}
\end{equation}
where $\mathbf x=(x,y)$ and
$g(\mathbf x;a,b,\sigma)=\exp\!\left(-[(x-a)^2+(y-b)^2]/\sigma^2\right)$.
The gradient and Laplacian act only on the spatial coordinates. The initial condition contains three localized peaks with different amplitudes, widths, and locations.

\subsection{Klein--Gordon Equations}
For KG3D and KG4D, let $d_s\in\{2,3\}$ be the number of spatial coordinates, $\Omega_x=[-1,1]^{d_s}$, and $t\in[0,10]$. The total input dimension used in Section~\ref{sec:method} is $d=d_s+1$. Define
\begin{equation}
S(\mathbf x)=\sum_{i=1}^{d_s}x_i,\qquad
P(\mathbf x)=\prod_{i=1}^{d_s}x_i,\qquad
u^\star(\mathbf x,t)=S(\mathbf x)\cos(2t)+P(\mathbf x)\sin(2t).
\end{equation}
The corresponding manufactured problem is
\begin{equation}
\begin{aligned}
    &u_{tt}-\Delta_{\mathbf x}u+u^2=f,
      &&(\mathbf x,t)\in\Omega_x\times(0,10],\\
    &u(\mathbf x,0)=S(\mathbf x),\qquad
      u_t(\mathbf x,0)=2P(\mathbf x),
      &&\mathbf x\in\Omega_x,\\
    &u(\mathbf x,t)=u^\star(\mathbf x,t),
      &&(\mathbf x,t)\in\partial\Omega_x\times[0,10],
\end{aligned}
\end{equation}
where $f=(u^\star)^2-4u^\star$, since $u^\star_{tt}=-4u^\star$ and $\Delta_{\mathbf x}u^\star=0$. Thus $d_s=2$ gives the stated KG3D solution and $d_s=3$ gives KG4D. The initial-velocity condition follows by differentiating the manufactured solution; it is part of the complete second-order-in-time problem specification.

\section{Experimental Settings} \label{sec:experiment-settings}
In this section, we summarize the experimental configurations used for all benchmarks. The key hyperparameters, including the number of training epochs and other model-specific settings, are reported in Table~\ref{tab:key_hyperparameters_all_pdes}.
\begin{table}[htbp]
    \centering
    \caption{Key hyperparameters for the benchmarks.}
    \label{tab:key_hyperparameters_all_pdes}
    \small
    \begin{tabular}{lccccc}
        \toprule
        Hyperparameter
        & FM3D
        & H3D
        & ND3D
        & KG3D
        & KG4D \\
        \midrule
        Training epochs
        & 80,000
        & 50,000
        & 100,000
        & 50,000
        & 100,000 \\

        Learning rate
        & $5\times10^{-3}$
        & $1\times10^{-3}$
        & $2\times10^{-3}$
        & $5\times10^{-3}$
        & $1\times10^{-3}$ \\

        Batch size
        & 1,024
        & 1,024
        & 1,024
        & 1,024
        & 512 \\

        Hidden features
        & 48
        & 16
        & 32
        & 48
        & 128 \\

        Initial number of Gaussians
        & 2,000
        & 3,000
        & 600
        & 3,000
        & 2,000 \\

        Minimum/maximum Gaussians
        & 600/4,000
        & 600/5,000
        & 300/1,000
        & 600/4,000
        & 200/2,000 \\

        Polynomial degree
        & 2
        & 2
        & 2
        & 2
        & 2 \\

        Gaussian scale parameter
        & 0.15
        & 0.15
        & 0.20
        & 0.15
        & 0.20 \\

        Adaptation start
        & 3,000
        & 1,000
        & 3,000
        & 2,000
        & 1,000 \\

        Adaptation interval
        & 2,000
        & 2,000
        & 2,000
        & 3,000
        & 2,000 \\

        Split fraction
        & 0.02
        & 0.03
        & 0.03
        & 0.015
        & 0.03 \\

        Adaptive sampling interval
        & 3,000
        & 3,000
        & 3,000
        & 4,000
        & 3,000 \\

        $\tau_{\mathrm{split}}$
        & 1.3
        & 1.3
        & 1.3
        & 1.5
        & 1.3 \\

        $\tau_{\mathrm{merge}}$
        & 0.7
        & 0.5
        & 0.5
        & 0.4
        & 0.5 \\
        \bottomrule
    \end{tabular}
\end{table}

\FloatBarrier
\section{Visualization} \label{sec:vis}
In this section, we provide qualitative visualizations of the predicted solutions and corresponding absolute-error distributions on representative PDE benchmarks, with comparisons against reference solutions and selected baselines in Figures~\ref{fig:FM3d}, \ref{fig:Helmholtz3d}, and \ref{fig:KG3D}.

For FM3D, ReCAP accurately captures the evolving rotational mixing structures over time and closely follows the reference solution. The corresponding error maps show that the remaining errors are mainly localized around the increasingly complex spiral interfaces, while ReCAP maintains lower and more concentrated errors than the representative baselines.

For H3D and KG3D, ReCAP also reproduces the dominant spatial patterns across different slices and time instances. In both cases, the predicted solutions remain close to the reference fields, and the absolute-error maps indicate reduced deviations compared with SPINN and PirateNets, supporting its accuracy on the tested equations.

\begin{figure}[!htbp]
    \centering
    \includegraphics[width=1\linewidth]{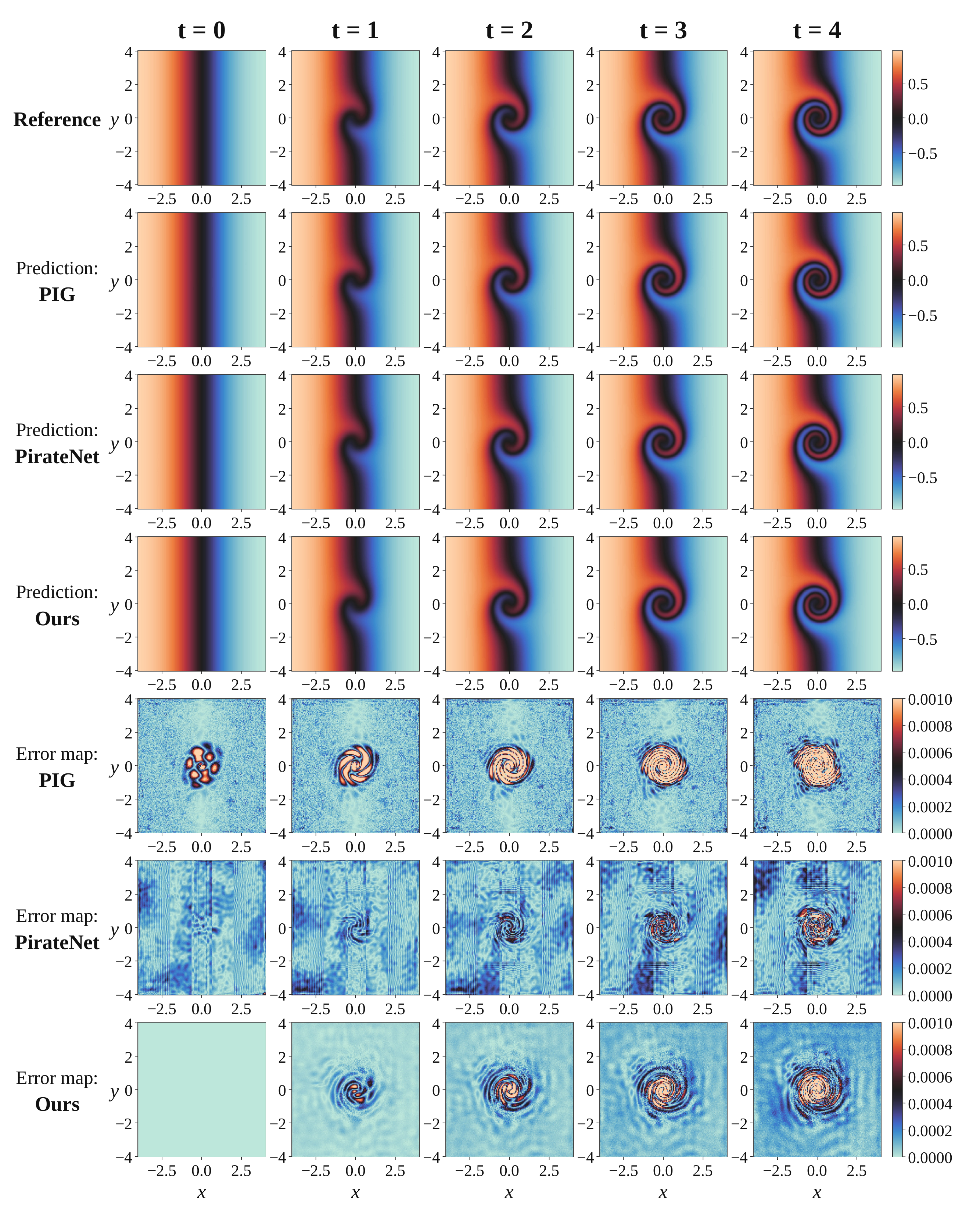}
    \caption{FM3D solution and absolute-error snapshots over the evaluated
    time points, comparing the reference field with ReCAP and representative
    baselines.}
    \label{fig:FM3d}
\end{figure}

\begin{figure}[!htbp]
    \centering
    \includegraphics[width=1\linewidth]{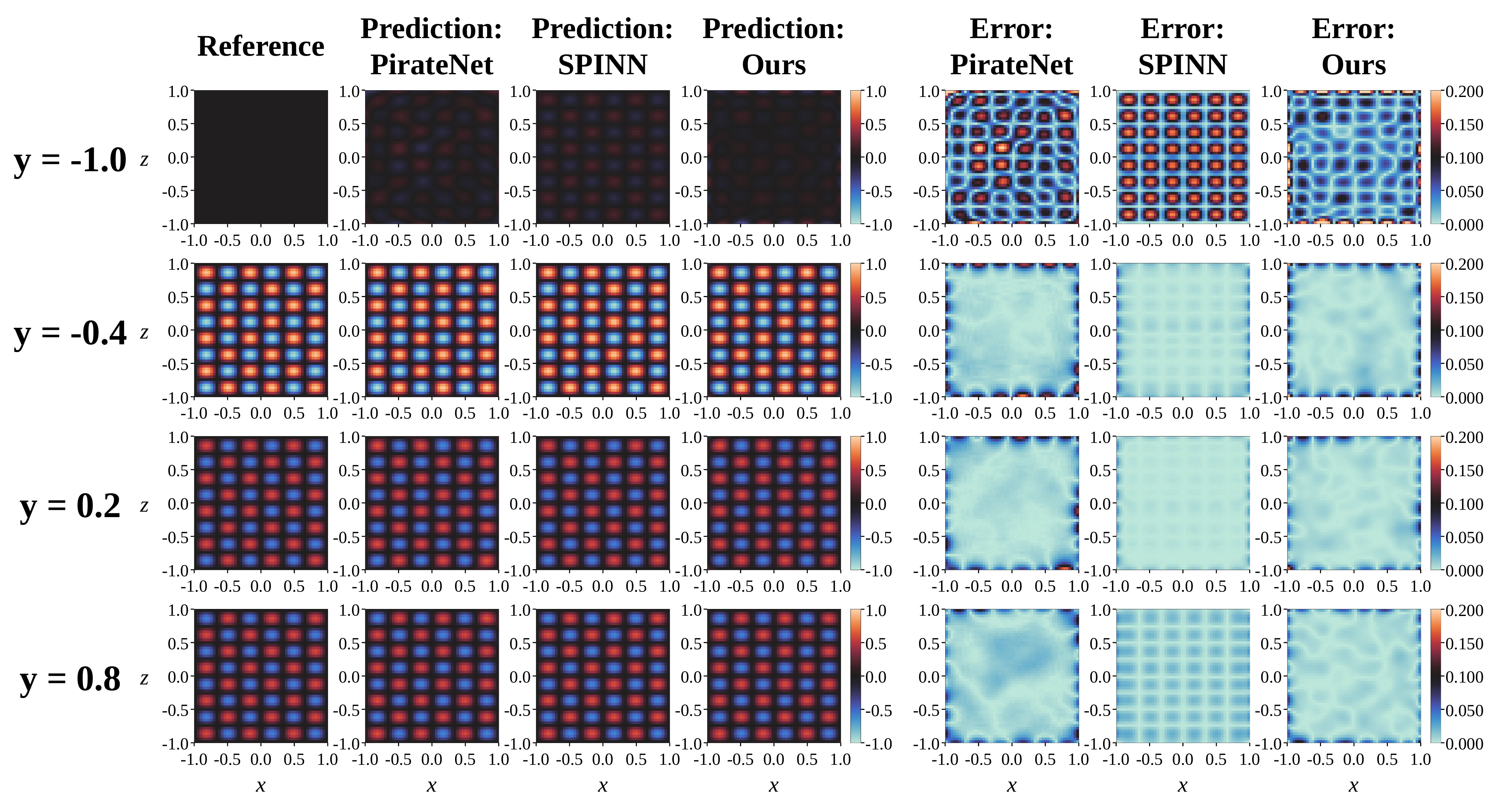}
    \caption{H3D reference solution, model predictions, and absolute-error
    maps on representative two-dimensional slices.}
    \label{fig:Helmholtz3d}
\end{figure}

\begin{figure}[!htbp]
    \centering
    \includegraphics[width=1\linewidth]{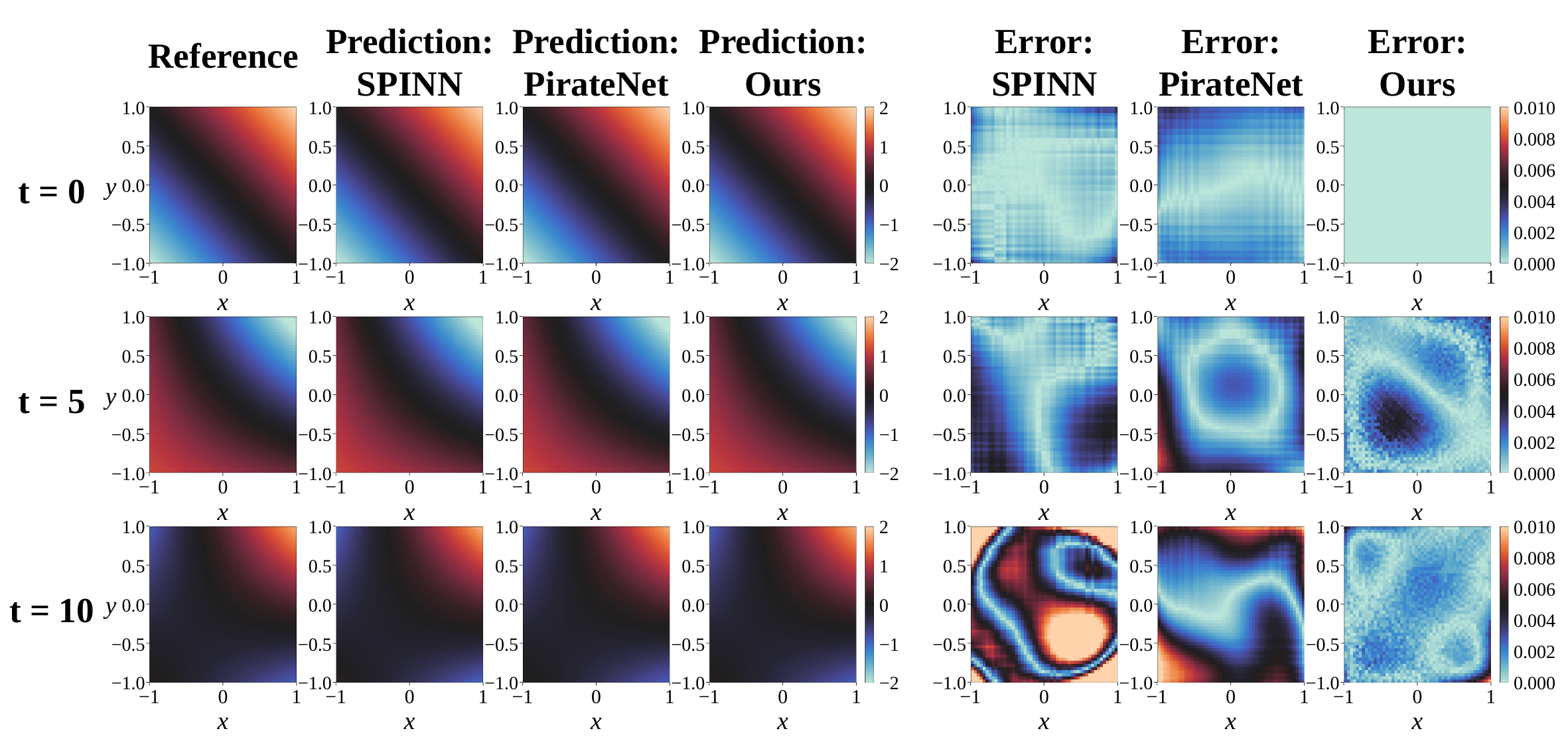}
    \caption{KG3D reference solution, model predictions, and absolute-error
    maps on representative two-dimensional slices.}
    \label{fig:KG3D}
\end{figure}

\FloatBarrier
\section{Computational Cost} \label{sec:computational-cost}
Figure~\ref{fig:computational-cost} compares reported floating-point operation counts on KG3D. ReCAP uses 0.53 GFLOPs in this comparison, matching SINN and falling below ActNet (1.64), PirateNets (2.59), and CoupledNet (4.38), while Scale-PINN uses 0.27 GFLOPs. These operation counts do not establish wall-clock training-time savings.

\begin{figure}[!htbp]
    \centering
    \includegraphics[width=0.8\linewidth]{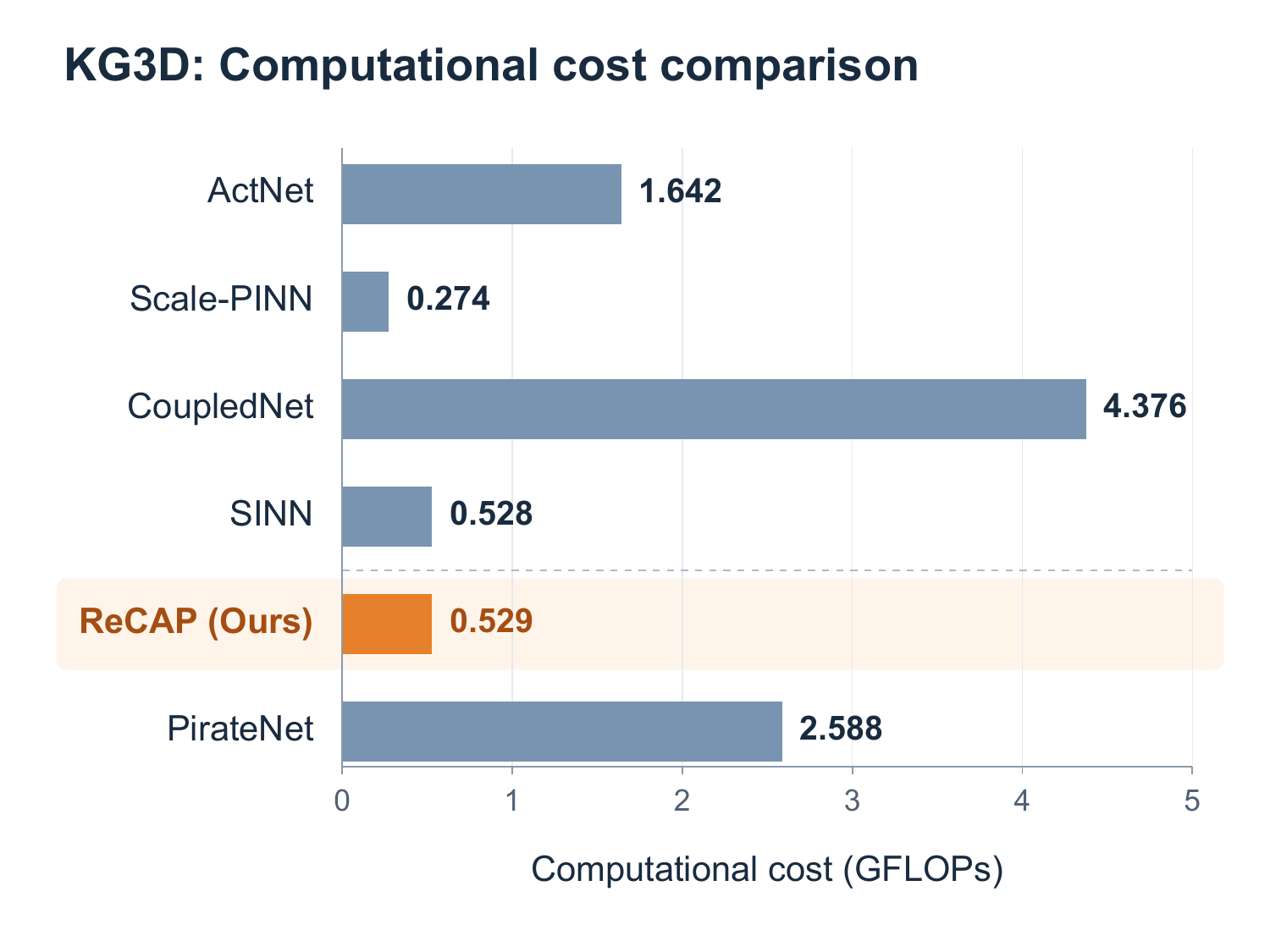}
    \caption{Reported computational cost on KG3D in GFLOPs (lower is better). ReCAP has the same reported operation count as SINN; Scale-PINN has the lowest count among the displayed methods.}
    \label{fig:computational-cost}
\end{figure}

\end{document}